\newtheorem{defn}{Definition}[section]
\newtheorem{theorem}{Theorem}
\newtheorem{lemma}[theorem]{Lemma}
\newtheorem{prop}{Proposition}
\title{\LARGE \bf
Partitioning Strategies and Task Allocation for Target-tracking with Multiple Guards in Polygonal Environments
}
\author{Hamid Emadi$^{1}$ , Tianshuang Gao$^{2}$ and Sourabh Bhattacharya$^{3}$
\thanks{$^{1}$Hamid Emadi and Sourabh Bhattacharya are with the Department of Mechanical Engineering, Tianshuang Gao is with the Department of Computer Science,
		Iowa State University, Ames, IA 50011
        {\tt\small \{emadi,sbhattac,tsgao\}@iastate.edu}}%
        }
\begin{document}

\maketitle
\thispagestyle{empty}
\pagestyle{empty}


\begin{abstract}
This paper presents an algorithm to deploy a team of {\it free} guards equipped with omni-directional cameras for tracking a bounded speed intruder inside a simply-connected polygonal environment. The proposed algorithm partitions the environment into smaller polygons, and assigns a guard to each partition so that the intruder is visible to at least one guard at all times. Based on the concept of {\it dynamic zones} introduced in this paper, we propose event-triggered strategies for the guards to track the intruder. We show that the number of guards deployed by the algorithm for tracking is strictly less than $\lfloor {\frac{n}{3}} \rfloor$ which is sufficient and sometimes necessary for coverage. We derive an upper bound on the speed of the mobile guard required for successful tracking which depends on the intruder's speed, the road map of the mobile guards, and geometry of the environment. Finally, we extend the aforementioned analysis to orthogonal polygons, and show that the upper bound on the number of guards deployed for tracking is strictly less than $\lfloor {\frac{n}{4}} \rfloor$ which is sufficient and sometimes necessary for the coverage problem. 

\end{abstract}

\section{INTRODUCTION}

In the past decade, mobile robotic networks have become a ubiquitous part of human society \cite{viglietta2012guarding,ding2012coordinated,quintero2010optimal,masehian2014sensor}. An important application has been in the area of surveillance and data gathering. Although, electronic and biometric techniques are emerging rapidly in security applications, vision-based monitoring using static camera networks is still the most prevalent technique used for persistent surveillance \cite{roy2012camera}. However, the scalability of this technique is poor due to the huge amount of data acquired by modern camera networks. Mobile camera networks alleviate the problem of {\it data deluge} to a significant extent. In this work, we explore a scenario in which a team of mobile agents that can visually track entities in the environment are deployed in a surreptitious manner for tracking a mobile intruder.



Target tracking refers to the problem of tracking a mobile object, called a {\it target}.
Based on the sensing modality and sensing constraints, there is a range of problems
that can be addressed under this category. Several variants of the target-tracking problem have been considered in the past that consider constraints in motion as well as sensing constraints for both agents. For an extensive discussion regarding the previous work on target tracking and its applications, we refer the reader to \cite{murrieta2007surveillance, bhattacharya2011cell} .

In an adversarial setting, the target-tracking problem gives rise to a visibility-based pursuit evasion game \cite{guibas1997visibility}. Tools from differential game theory have been used to investigate motion and control strategies for the observer. 
However, the aforementioned tools fail to elucidate the dependence of the tracking performance on the geometric complexity of the environment. In \cite{guillermo2016}, the authors capture this relationship by drawing an explicit connection between target-tracking and art-gallery problems involving mobile guards \cite{o1987art}. This paper is in a similar vein, and investigates partitioning techniques for a polygon to deploy guards for tracking a mobile intruder.


The art gallery problem is a classic problem in which the goal is to deploy the minimum number of guards in a polygonal environment to ensure visual coverage of the entire environment. Generally, the guards in the art gallery problem are assumed to be \textit{point guards} which are stationary agents equipped with omni-directional camera having infinite sensing range. Although, the art gallery problem for several classes of polygons is NP-hard~\cite{lee1986computational}, there are tight bounds on the number of stationary guards that can cover a simple $n$-sided polygon and non-simple polygons containing holes~\cite{chvatal1975combinatorial,o1987art,hoffmann1991art}. The notion of mobile guards was introduced in \cite{avis1981optimal}. Mobile guards are classified as edge, diagonal or free guards, depending on the path which they are patrolling. Similar to the point guards, there are tight upper and lower bounds on the number of diagonal and edge guards required to cover a polygon\cite{o1983galleries,o1987art}.  

Beside art-gallery problems, there have been several works that have addressed the coverage problem in a polygon. In \cite{tokekar2014polygon}, the authors study the problem of guarding a polygon under the $\Delta$-guarding constraint. This constraint ensures that all sides of a convex object are visible in the environment at all times. The problem of inspecting an entire polygon with a group of mobile guards is called the \textit{watchman's route} problem. In \cite{katz2011guarding,biedl2016guarding}, two variations of the watchman's route problem, one in which the guards travel on minimum-length route and the other in which the minimum number of mobile guards are deployed, are examined. These algorithms provide a practical method to search an intruder inside a polygon. Authors in \cite{ghodsi2016clearing} propose a motion-planning algorithm for a group of sliding robots, assuming that they move along the pre-located line segments with a constant speed to detect all the evaders with unbounded speed. Such problems are classified as \textit{search} problems~\cite{viglietta2012guarding}. A visibility based search in a polygon wherein all guards and intruders have bounded speeds is examined in \cite{tovar2008visibility}. In \cite{obermeyer2011complete}, the authors present a rotation schedule for a group of static searchlight sensors in order to detect all targets in the polygon. Furthermore, distributed algorithms are proposed in \cite{ganguli2006distributed,obermeyer2011multi} for a group of mobile guards to reach points from which the entire polygon is visible. Using these algorithms, a group of mobile sensors can fully cover the entire polygon from an arbitrary initial position. For an elaborate survey on art gallery problems and coverage algorithms for polygons, we refer the reader  to \cite{ghosh2007visibility,o1987art}. 

Another variant of the surveillance problem involves covering the entire environment with pan, tilt, zoom (PTZ) cameras for tracking or detecting an intruder. The challenge here is to plan and coordinate the motion of cameras to track all the evaders over long time intervals. Authors in \cite{quintero2010optimal,roy2012camera} provide a dynamic control over a network of PTZ camera in order to track an evader with a random trajectory. Some of the tracking algorithms are based on the current position of the target. For instance, a collision free motion strategy is proposed in~\cite{masehian2014sensor} for a differential drive mobile robot for tracking a target. On the other hand,~\cite{lavalle1997motion,ding2012coordinated} consider a scenario in which the sensor has some apriori knowledge about the target's behavior, and predicts its new position for tracking tasks. 

In this work, we formulate the tracking problem as a task allocation problem. The guards are assigned the task of tracking the intruder in a partition of the polygon. The contributions of this paper are as follows: (i) We propose a deployment algorithm to track the intruder which requires less than $\lfloor\frac{n}{3}\rfloor$ guards; the sufficient and sometimes necessary number of guards for coverage. This provides a new upper bound on the number of guards required for target tracking. (ii) We introduce the idea of`dynamic zones", and use it to propose event-triggered strategies for the guard to track the intruder. (iii) We provide an upper bound on the speed of the guard required to track the intruder. This bound encompasses the geometric parameters of the environment. (iv) We extend the techniques to orthogonal polygons to propose deployment strategy for $\lfloor\frac{n}{4}\rfloor$ guards to track successfully.



The rest of the paper is organized as follows. In section II, we present the problem statement. In section III, we propose a deployment of guards in a general polygon by partitioning it into basic polygons. In section IV, the deployment and strategy of guards in the basic polygons are described. In section V, we focus on a special class of polygons, which is called orthogonal polygons. In Section VI, we conclude the paper with a few remarks.


\section{Problem Statement}
Consider a team of mobile agents, called {\it guards}, inside a simply-connected polygonal environment $P$ with $n$-sides. Assume that each observer is equipped with an omni-directional camera having an infinite sensing range. The environment contains another mobile agent called the {\it intruder}. The objective of the guards is to track the intruder inside the polygon at all times. The speed of the guard and intruder are denoted as $v_e \in [0,\bar{v}_e]$ and $v_p \in [0,\bar{v}_p]$, respectively.  Two points inside the polygon are mutually \textit{visible} if the line segment joining them, called the {\it line of sight} (LOS), is contained in $P$. Since we consider infinite sensing range, LOS can be obstructed only by obstacles (i.e. the reflex corners of the environment).

According to the art gallery theorems, $\lfloor \frac{n}{3}\rfloor$ static guards are sometimes necessary and always sufficient to cover the interior of a simply connected polygon $P$ of $n$-vertices. In a coverage problem, the objective is to cover the entire environment simultaneously, regardless of the intruder's position. There are some scenarios in which certain intruders in the environment should be tracked. It would be extravagant for these scenarios to cover the entire environment. Contrary to the coverage problem, it is sufficient to keep the intruder in the guards' field of view in the tracking problem. Since the intruder has bounded speed, deployment of the mobile guards with bounded speed is practical. In this work, we deploy a group of mobile and static guards to track the intruder within a polygon $P$, such that the total number of required guards is strictly less than $\lfloor \frac{n}{3}\rfloor$. The reduction of the number of guards comes from the motion ability of the guards. 
In \cite{guillermo2016}, the authors presented a deployment for diagonal guards based on the triangulation of the polygon. In this work, we investigate deployment strategies for {\it free guards}, i.e., guards that are free to move inside the polygon.

\section{Target Tracking in General Polygons }
In this section, we present a technique to partition a general polygon into smaller polygonal regions so that guards can be allocated to each region for tracking the intruder. This can be modeled as a multi-robot task allocation problem (MTAP) wherein guarding the different regions are tasks that need to be allocated to the robots. This can also be modeled as a resource allocation problem wherein guards are resources that need to be allocated to the different regions. The goal of task allocation is to assign robots to subtasks in order  to reach the performance of the system. The robots share their individual expected task to maximize the team performance. Algorithms to solve the matching problem for weighted bipartite multi-graphs are used to solve MTAP \cite{nam2014assignment}. There is a mathematical modeling and analysis of the collective behavior of dynamic task allocation in \cite{lerman2006analysis}. In this work, we assign a guard to a partition and the motion strategy of the mobile guard is given based on the dynamics of the intruder.

The following lemma presents the main idea of the partitioning technique:

\begin{lemma}\label{partitiong-general-polygon}
\cite{o1987art} Consider a polygon $P$ with $n \geq 10$ vertices, and $T$ a triangulation graph of $P$. There exists a diagonal $d$ in $T$ that divides $T$ into two partitions, one of which contains $k = 5, 6, 7,$ or $8$ arcs corresponding to edges of $P$.
\end{lemma}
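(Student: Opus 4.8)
The plan is to argue by a discrete intermediate-value (or "jumping pointer") argument on the dual tree of the triangulation. First I would set up the structure: let $T$ be the triangulation of $P$, and consider its dual graph, which is a tree with one node per triangle and one edge per diagonal of $T$. Each diagonal $d$ of $T$, when removed, splits this tree into two components; correspondingly it splits the boundary $\partial P$ into two arcs, one containing $k(d)$ edges of $P$ and the other containing $n-k(d)$ edges of $P$, where $2 \le k(d) \le n-2$ (each side of a diagonal must contain at least one triangle, hence at least one polygon edge, and in fact at least two since a single triangle on one side contributes two boundary edges unless that triangle is an "ear" — I would be careful here, but at minimum $k(d)\ge 2$). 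The goal is to find a diagonal with $k(d)\in\{5,6,7,8\}$.

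The core of the proof is a monotone-traversal argument. Start at some diagonal $d_0$ and orient it so the "small side" has the fewer polygon edges; if $k(d_0)\le 8$ and $k(d_0)\ge 5$ we are done. Otherwise $k(d_0)\ge 9$ (the case $k(d_0)\le 4$ is symmetric by looking from the other side, since $n\ge 10$ guarantees the complementary arc has $\ge 6$ edges — one has to check the small cases). Now walk into the large side: the triangle $\tau$ adjacent to $d_0$ on the large side has its third vertex, and its other two edges are either polygon edges or diagonals. Replacing $d_0$ by those one or two diagonals, the large side is partitioned further, and the key quantitative claim is that passing from a diagonal to an adjacent one across a single triangle changes the count $k$ by a controlled amount: a triangle contributes either $0$, $1$, or $2$ polygon edges, so when we "move" the cut past one triangle, $k$ decreases by at most $2$ (if the triangle has two ear-edges, we split into two sub-arcs and pick the larger, decreasing by at most... this is the delicate bookkeeping). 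If every step decreases $k$ by at most $3$, then since we start at $k\ge 9$ and must eventually reach $k=2$, some intermediate diagonal lands in the interval $[5,8]$; more precisely, I want: from $k\ge 9$, one step reaches some $k'$ with $k-k' \le 3$, so $k' \ge 6$... I need the step size bounded by $3$ and the target interval to have width $4=\{5,6,7,8\}$, which is exactly why the interval has four elements.

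The step-size analysis is where I expect the main obstacle. The subtlety is that a triangle $\tau$ on the large side may have both of its non-$d_0$ edges be diagonals, in which case removing $\tau$ splits the large arc of $k(d_0)$ edges into two arcs whose sizes sum to $k(d_0)$ (they share the two "new" diagonals), and we continue along whichever sub-arc is larger; we must show this larger sub-arc still has at least $\lceil k(d_0)/2\rceil \ge 5$ edges but — and this is the crux — if it has more than $8$ we recurse, and the claim is that the descent from $k\ge 9$ cannot "skip over" the band $\{5,6,7,8\}$. Concretely I would prove: if $k(d)\ge 9$, the adjacent triangle on the large side yields a diagonal $d'$ with $\lceil (k(d)-2)/2 \rceil \le k(d') \le k(d)-1$, and then check that for $k(d)\ge 9$ the lower bound $\lceil(k(d)-2)/2\rceil$... no — I'd instead argue directly that among the (one or two) diagonals bounding $\tau$, at least one has $k'\ge k(d)-3$: since $\tau$ contributes at most $2$ polygon edges total, the two new arcs have sizes summing to at least $k(d)-2$, so the larger is at least $\lceil (k(d)-2)/2\rceil$, but I also need an upper-biased bound. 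The clean way: pick the new diagonal adjacent to the longer portion; it satisfies $k(d) - k(d') \le$ (edges of $P$ in the shorter portion) $+ 2 \le \lfloor k(d)/2\rfloor + $ something — I will chase this carefully so that $k(d')\in[5,8]$ is forced or $k(d')\ge 9$ and we iterate, with $k$ strictly decreasing so termination is guaranteed. The hypothesis $n\ge 10$ is used precisely to ensure that the very first diagonal already has $k\ge 2$ on both sides with room to start the descent, and that the symmetric "small side" case cannot get stuck below $5$.
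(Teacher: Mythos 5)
Your overall strategy (walk across triangles of the triangulation while tracking how many polygon edges the current diagonal cuts off) can be made to work, but as written the key quantitative step is left unresolved, and one of the two bounds you float is false. The claim that passing the cut across one triangle ``decreases $k$ by at most $3$'' is not true: if the triangle adjacent to $d$ on the large side has both of its other sides equal to diagonals, the $k$ polygon edges on that side split into two groups summing to exactly $k$, so stepping to the larger group can drop $k$ to about $\lceil k/2\rceil$, a drop far exceeding $3$ for large $k$; the branch of your argument conditioned on ``step size $\le 3$'' therefore collapses. Your bookkeeping on the alternative bound is also slightly off: the two sub-arcs sum to exactly $k(d)$, not ``at least $k(d)-2$'' (each of the $k$ polygon edges on the large side lies in exactly one of the two pieces determined by the triangle, and a side of the triangle that is itself a polygon edge simply is a piece of size $1$). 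With that correction the clean invariant is: whenever $k(d)\ge 9$, the larger piece satisfies $5 \le \lceil k(d)/2\rceil \le k' \le k(d)-1$, and it is necessarily bounded by a diagonal (a polygon-edge side has size $1$, so it cannot be the larger piece). Hence the sequence of values is strictly decreasing and never falls below $5$ while it exceeds $8$, so it must land in $\{5,6,7,8\}$; the starting diagonal with $k\ge 5$ exists because $n\ge 10$ forces one side of any diagonal of $T$ to contain at least $5$ polygon edges. You gesture at this but never pin it down, so the proposal as it stands is not a complete proof.

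For comparison, the paper itself gives no proof and simply cites O'Rourke; the standard argument there is the extremal, one-step version of your traversal: among all diagonals of $T$ that cut off at least $5$ polygon edges (a nonempty set since $n\ge 10$), choose one, $d$, cutting off the minimum number $k$ of edges; the triangle of $T$ supported by $d$ on that side splits those $k$ edges into two groups, each of which is either a single polygon edge or is cut off by a diagonal and hence, by minimality of $k$, has at most $4$ edges; therefore $k\le 4+4=8$, while $k\ge 5$ by choice. This removes all termination and step-size bookkeeping, and is the cleanest way to finish the argument you started.
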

From lemma~\ref{partitiong-general-polygon}, we can recursively partition a polygon $P$ into smaller polygons such that each partition contains $6,7,8,$ or $9$ edges. The process terminates with a polygon with less than 6 edges.
\begin{defn}
\textit{Minimal partitioning} is partitioning a polygon $P$ using diagonals of triangulation $T$ into smaller partitions such that a pair of partitions share at most one edge and except one partition, all partitions contain $6,7,8,$ or $9$ edges. The remaining partition may be a pentagon, quadrilateral or triangle.
\end{defn}

First, we show that the sufficient number of mobile guards to track a target is, in general, less than the number of static guards required to cover the entire polygon.
\begin{prop} \label{prop mobile guards}
Every polygon $P$, can be guarded by less than $\lfloor \dfrac{n}{3}\rfloor$ mobile guards if following statements hold:

\begin{enumerate}
\item Minimal partitioning $P$ contains at least $3$ partitions.
\item For every partition $P_i$ the sufficient number of mobile guards to track an intruder is less than $\lfloor \dfrac{n_i}{3}\rfloor$.
\end{enumerate} 
\end{prop}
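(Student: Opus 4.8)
The plan is to count guards additively over the pieces produced by a minimal partitioning and then exploit the slack in each piece together with the shared-edge structure. First I would invoke the definition of minimal partitioning: $P$ is cut along diagonals of a triangulation into partitions $P_1,\dots,P_m$ ($m\ge 3$ by hypothesis~(1)), where each $P_i$ has $n_i$ edges, $m-1$ of them satisfying $n_i\in\{6,7,8,9\}$ and the last one being a pentagon, quadrilateral, or triangle. Because the cuts are diagonals of a triangulation, each partition edge is either an original edge of $P$ or one of the diagonals used as a cut, and — again by the definition — any two partitions share at most one edge. The key bookkeeping identity I would establish is that $\sum_{i=1}^m n_i = n + 2(m-1)$: there are exactly $m-1$ cutting diagonals (a tree-like recursive bisection producing $m$ leaves uses $m-1$ cuts), and each contributes $2$ to the total edge count of the partitions while the $n$ original edges are each counted once.

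Next I would bound the guards. By hypothesis~(2), partition $P_i$ can be tracked with at most $\lfloor n_i/3\rfloor - 1$ mobile guards (strictly less than $\lfloor n_i/3\rfloor$, and since these are integers this means at most $\lfloor n_i/3\rfloor - 1$). Summing, the total number of guards $G$ satisfies
\[
G \;\le\; \sum_{i=1}^m\left(\left\lfloor \frac{n_i}{3}\right\rfloor - 1\right) \;\le\; \sum_{i=1}^m \frac{n_i}{3} - m \;=\; \frac{n + 2(m-1)}{3} - m \;=\; \frac{n}{3} - \frac{m+2}{3}.
\]
Since $m\ge 3$, the right-hand side is at most $\frac{n}{3} - \frac{5}{3} < \lfloor n/3\rfloor$, which is the claimed bound. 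I would also remark on the degenerate small-partition case: if the leftover partition is a triangle it needs one guard (consistent with $\lfloor 3/3\rfloor - 1 = 0$ being too small, so I would handle $n_i\le 5$ separately, noting a pentagon needs at most $1 = \lfloor 5/3\rfloor$ guard, a quadrilateral or triangle at most $1$), and then re-verify the arithmetic still lands strictly below $\lfloor n/3\rfloor$ because the remaining $m-1\ge 2$ partitions each give a full unit of slack.

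The main obstacle I anticipate is the leftover (small) partition, which breaks the clean inequality $G_i \le \lfloor n_i/3\rfloor - 1$: a triangle, quadrilateral, or pentagon may genuinely need one guard even though $\lfloor n_i/3\rfloor - 1 \le 0$. The fix is to absorb its (at most one) guard into the surplus generated by the $m-1\ge 2$ well-behaved partitions, each of which saves at least one guard relative to the coverage bound; concretely one shows $G \le \big(\sum_{i\ :\ n_i\ge 6}(\lfloor n_i/3\rfloor - 1)\big) + 1$ and checks this is still $< \lfloor n/3\rfloor$ using $m-1\ge 2$ and the edge-count identity. A secondary, more conceptual point to pin down is the claim that a minimal partitioning with $m\ge 3$ pieces exists at all for large enough $n$ and that hypothesis~(2) is attainable piecewise — but since the proposition is stated conditionally on (1) and (2), I would simply take these as given and focus the argument on the additive counting.
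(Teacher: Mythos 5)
Your overall route is the same as the paper's: partition by the minimal partitioning, use the edge-count identity (each of the cutting diagonals contributes $2$ to $\sum_i n_i$), give each $6$--$9$-gon partition $\lfloor n_i/3\rfloor-1$ guards by hypothesis (2), give the small leftover piece one extra guard, and absorb that guard into the slack of the large partitions. The bookkeeping identity and the first inequality chain are fine, and you correctly flag that the leftover triangle/quadrilateral/pentagon cannot be charged $\lfloor n_i/3\rfloor-1$ guards.

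The gap is in the absorption step. You claim that $m-1\ge 2$ large partitions provide enough slack, but with exactly two $6$--$9$-gon partitions plus a leftover triangle the count can land exactly on $\lfloor n/3\rfloor$ rather than strictly below it: take two hexagons and a triangle glued along two diagonals, so $n=6+6+3-4=11$; your bound gives $(\lfloor 6/3\rfloor-1)+(\lfloor 6/3\rfloor-1)+1=3=\lfloor 11/3\rfloor$, which is not strictly fewer. In general your estimate yields $G\le \tfrac{n}{3}-\tfrac{(m-1)+n'-3}{3}$ (with $n'$ the size of the leftover), and for $m-1=2$, $n'=3$, $n\equiv 2\pmod 3$ the strict inequality is lost. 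The paper avoids exactly this corner: in its proof the case where the leftover $P'$ exists is concluded only for $r\ge 3$ partitions of size $6$--$9$ (its final step $(n-k'-r+3)/3<\lfloor n/3\rfloor$ is asserted ``for $r\ge 3$''), while the case with no leftover is handled for all $r$. So to close your argument you must either read hypothesis (1) as requiring at least three of the $6$--$9$-gon partitions $P_i$ (not three partitions counting the leftover), or otherwise produce an extra unit of slack in the two-large-partitions-plus-triangle configuration; as written, that borderline case defeats the strict inequality.
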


\begin{proof}
Consider a triangulation $T$ of polygon $P$. According to lemma~\ref{partitiong-general-polygon}, any general $n$-sided polygon $P$ can be partitioned into $P_i$'s, such that each partition is a hexagon, septagon, octagon or nonagon. Furthermore, it may contain only one partition $P^\prime$ which is a triangle, quadrilateral or pentagon. Let $r$ denotes the number of $P_i$'s. Let $\hat{k}_i +k_i$ be the number of edges of partitions, where  $\hat{k}_i \in \{1,2\},$ and $k_i \in \{4,5,6,7\}$. Partitioning $P$ may result in one more partition $P^\prime$ with $k^\prime $ edges, where $k^\prime \in \{0,3,4,5\}$. Since the partitions are obtained from the original polygon, the following relation holds between $n, k_i,\hat{k}_i$ and $k^\prime$,
\begin{equation}
n+2(r-1)-k^\prime = \sum_{i=1}^{r}{k_i+\hat{k}_i}.
\end{equation}

We have assumed that each $P_i$ can be guarded by at least one guard less than the required number of static guards, that is $\lfloor \frac{k_i+\hat{k}_i}{3} \rfloor -1$. We need to show that the total number of guards is less than $\lfloor \frac{n}{3}\rfloor$. For those partitions, in which $P^\prime$ exists (i.e. $k^\prime \neq 0$) one more guard is needed to cover $P^\prime$. Therefore, the total number of guards is given by the following:
\begin{eqnarray} \nonumber
&&\sum_{i=1}^{r} {\lfloor \dfrac{k_i+\hat{k}_i}{3} \rfloor} -r+1 \leq \lfloor \dfrac{\sum_{i=1}^{r}k_i+\hat{k}_i}{3} \rfloor - r+1\\ \nonumber
 &\leq& \lfloor \dfrac{n+2(r-1)-k^\prime+2}{3} \rfloor - r+1 \\ \nonumber
 &\leq& \dfrac{n-k^\prime +2(r-1)+2}{3} -r+1 = \dfrac{n-k^\prime -r+3}{3}\\ 
 &<& \lfloor \frac{n}{3}\rfloor \quad  \text{for} \quad r \geq 3.
\end{eqnarray}
For those polygons, in which $P^\prime$ does not exist (i.e. $k^\prime = 0$), total number of guards is given by the following:
\begin{eqnarray} \nonumber
&&\sum_{i=1}^{r} {\lfloor \dfrac{k_i+\hat{k}_i}{3} \rfloor} -r \leq \lfloor \dfrac{\sum_{i=1}^{r}k_i+\hat{k}_i}{3} \rfloor - r\\ \nonumber
 &\leq& \lfloor \dfrac{n+2(r-1)}{3} \rfloor - r \leq \dfrac{n+2(r-1)}{3} -r = \dfrac{n-r-2}{3}\\ 
 &<& \lfloor \frac{n}{3}\rfloor \quad  \text{for all}\quad r.
\end{eqnarray}
\end{proof}

\section{Tracking in Basic Polygons} 
In this section, we present deployment strategies for mobile guards within each partition.

According to proposition \ref{prop mobile guards}, as the number of partitions increases, the sufficient number of guards is decreased. 

We propose motion strategy, and derive the maximum speed required for the mobile guards to track an intruder within each $P_i$ such that the second condition in proposition \ref{prop mobile guards} is satisfied. Let $S_i^*$ denote the \textit{star region} which is defined as the bounded area, inside the environment, across the reflex vertex $O_i$. In other words, $S_i^*$ is the set of all points in $P$ such that the connecting line between those points and $O_i$ lies in $P$, and lies in the area constructed by two edges of reflex vertex $O_i$. Figure~\ref{star-region} shows the star region in a polygon. In the proposed technique, the motion strategy of the guards are based on the number of disjoint star regions in $P_i$. The following propositions characterizes the maximum number of disjoint star regions in hexagons, septagons and octagons. 

	   \begin{figure}
	   \centering
	   \includegraphics[scale=.3]{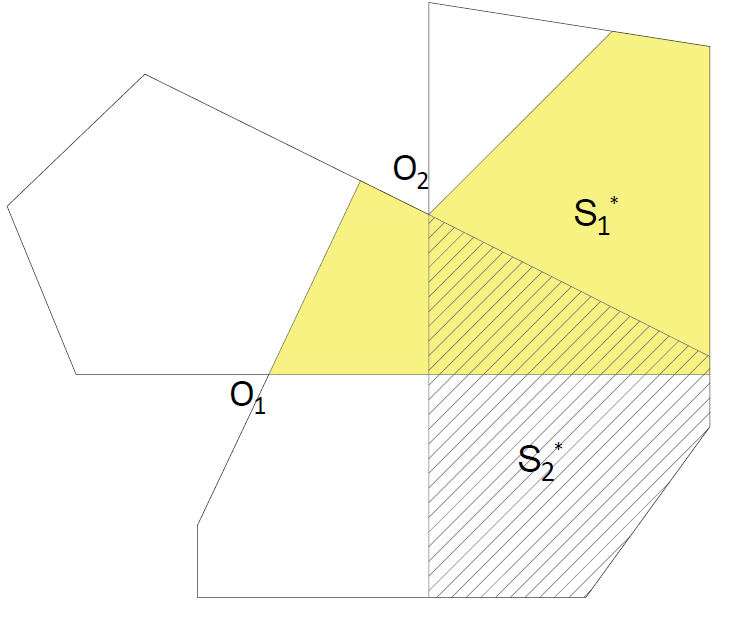}
	   \caption{Star regions in a polygon containing two reflex vertices. $S_1^*$ yellow region, and $S_2^*$ hatched region.}
	   \label{star-region}
	   \end{figure}	

\begin{lemma} \label{lemma1}
	Every polygon $P$ with a reflex vertex and $r\geq 2$ non-intersecting edges with the star region has at least $r+3$ edges.  Moreover, $P$ contains at least $5$ edges for $r=1$.
\end{lemma}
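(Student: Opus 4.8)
The plan is to count the edges of $P$ incident to the reflex vertex together with the edges of $P$ carrying the ``far'' boundary of the star region. Let $O$ be a reflex vertex of $P$ with incident edges $e_1=Ov_1$ and $e_2=Ov_2$, and let $W$ be the wedge at $O$ bounded by $e_1,e_2$ and opening into the interior of $P$ (its opening angle is the reflex interior angle at $O$, hence exceeds $\pi$). By the definition of the star region, $S^{*}=V(O)\cap W$, where $V(O)$ is the visibility polygon of $O$; since $W$ and $V(O)$ are both star-shaped with respect to $O$, so is $S^{*}$, and $\partial S^{*}$ is a single closed polygonal curve through $O$. I would show that at least three distinct edges of $P$ meet $S^{*}$; since the $r$ edges disjoint from $S^{*}$ are different from those three, this immediately yields $n\ge r+3$.

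That $e_1$ and $e_2$ meet $S^{*}$ is clear: in a small disc about $O$ the region $P$ coincides with $W$ and is entirely visible from $O$, so near $O$ the curve $\partial S^{*}$ runs along $e_1$ on one side of $O$ and along $e_2$ on the other. To produce a third edge I would use the fact that $\partial S^{*}\subseteq\partial V(O)\cup\partial W$ and that every portion of this set other than sub-segments of genuine edges of $P$ is \emph{radial}, i.e.\ contained in a ray out of $O$: the two bounding rays of $W$ lie along $e_1,e_2$, and the ``windows'' of $V(O)$ are chords collinear with $O$. Parametrising the arc $\partial S^{*}\setminus\{O\}$, the direction-from-$O$ map is continuous, is constant on each radial portion, and leaves $O$ along $e_1$ at one end of the arc and along $e_2$ at the other; since those two directions differ, some portion of $\partial S^{*}$ must be non-radial, hence must lie on an edge $e_3$ of $P$ whose supporting line misses $O$. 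Then $e_3\notin\{e_1,e_2\}$, so three edges of $P$ meet $S^{*}$ and $n\ge r+3$.

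For the ``moreover'' part, any polygon with a reflex vertex has $n\ge 4$, so it is enough to exclude $n=4$ when $r=1$. A quadrilateral with a reflex vertex $O$ has its three remaining angles convex and the diagonal out of $O$ interior to $P$; that diagonal splits $P$ into two triangles, each entirely visible from $O$ and each contained in $W$, so $S^{*}=P$. Every one of the four edges then meets $S^{*}$, i.e.\ $r=0$, contradicting $r=1$; hence $n\ge 5$.

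I expect the extraction of the third edge to be the delicate step: it requires justifying the structural description of $\partial(V(O)\cap W)$ (edges of $P$ plus radial segments), checking that $\partial S^{*}\setminus\{O\}$ really is one arc whose ends leave $O$ along $e_1$ and $e_2$, and handling degenerate configurations (an edge of $P$ collinear with $O$, the non-radial portion shrinking to a single point, or a shadow ray of a reflex vertex meeting $\partial P$ at a vertex rather than in the relative interior of an edge) so that a bona fide third edge is still obtained. The remaining ingredients are short and elementary.
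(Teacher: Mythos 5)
Your overall plan (show that three distinct edges of $P$ must meet the star region, then add the $r$ disjoint edges) is the same counting idea as the paper's proof, and your continuity argument for the third edge is in fact a more careful justification of the paper's bare assertion that ``one more edge must intersect the star region because of simple connectedness.'' However, there is a genuine gap: you are working with the wrong region. You set $S^{*}=V(O)\cap W$ where $W$ is the reflex interior wedge at $O$ (angle $>\pi$), so your $S^{*}$ is essentially the whole visibility polygon of $O$. The paper's star region is instead the visible part of the \emph{opposite} quadrant, bounded by the extensions of $e_1,e_2$ beyond $O$ (angle $2\pi-\theta<\pi$); this is explicit in the paper's proof (``the star region is the first quadrant'' of the four formed by extending the two edges) and is what makes the star region the set of points from which a guard sees around the corner. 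Because your region strictly contains the paper's, an edge disjoint from the paper's star region (e.g.\ the separating segments $d_i$ and the quadrant-two/four edges used in Proposition~2, where this lemma is applied) may well intersect yours; hence the inequality you prove, $n\ge r+3$ with $r$ counted against your larger region, does not imply the lemma as stated and used. Concretely, your third edge $e_3$ could itself be one of the $r$ edges that miss the true star region, and your claim that $\partial S^{*}$ runs along $e_1$ and $e_2$ near $O$ is false for the intended region: there the boundary leaves $O$ along the \emph{extensions} of $e_1,e_2$, and $e_1,e_2$ touch the star region only at $O$ (the paper counts this touch as ``intersecting'').

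The good news is that the fix is mechanical: run exactly your direction-from-$O$ argument on $S^{*}=V(O)\cap Q$, where $Q$ is the opposite quadrant. Its boundary again consists only of radial pieces (portions of the two extension rays and visibility windows of $V(O)$, all collinear with $O$) plus portions of edges of $P$, the two end directions at $O$ are the two distinct extension directions, so a non-radial piece exists and lies on an edge $e_3$ whose supporting line misses $O$, hence $e_3\notin\{e_1,e_2\}$; with $e_1,e_2$ (meeting $S^{*}$ at $O$) and the $r$ disjoint edges this gives $n\ge r+3$. Your quadrilateral argument for the ``moreover'' part also needs repair for the same reason: for a dart with reflex vertex $O$ the true star region is not all of $P$, but one can instead note that, since $O$ lies inside the triangle formed by the other three vertices, the rays from $A$ through $O$ and from $C$ through $O$ exit through the two non-adjacent edges, so both of those edges meet the star region and $r=0$, which excludes $n=4$ when $r=1$.
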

\begin{proof}
	Any reflex vertex contains two intersecting line segments. Extending these line segments separates the entire space into four quadrants, where the star region is the first quadrant, and numbering proceeds counterclockwise. All edges should lie either in quadrant one, two or four. For $r>1$, two edges are used to construct the reflex vertex, and consequently they have intersection with the star region, $r$ edges may be on quadrants two or four, but one more edge must intersect with the star region because of the simply connectedness of the polygon. Therefore, the polygon contains $r+3$ edges. For $r=1$, two edges are needed to connect the non-intersecting edge to the other edge of the reflex corner. As a consequence, $P$ contains at least $5$ edges.	
\end{proof}
\begin{lemma} \label{lemma-2}
	In a pentagon with one reflex vertex and two non-intersecting edges with the star region, sum of each pair of angles of pentagon on each side of star region must be less than $\pi$ (i.e. $\alpha_1+\alpha_2 < \pi$).
\end{lemma}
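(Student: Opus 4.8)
The plan is to reduce the claim to the fact that the angles of a triangle sum to $\pi$. First I would fix notation using the structure exhibited in the proof of Lemma~\ref{lemma1}: write the pentagon as $OACDB$ in boundary order, with $O$ the reflex vertex, so that $OA$ and $OB$ are the two edges incident to $O$; $AC$ and $DB$ are the two edges that do not meet the star region; and $CD$ is the single remaining edge. Let $W$ be the wedge at $O$ bounded by the prolongations of $OA$ and $OB$ beyond $O$; the star region $S^*$ is the portion of $P$ lying inside $W$, and $CD$ is the edge that crosses $W$. I will prove $\alpha_1+\alpha_2<\pi$ for the pair $\alpha_1=\angle A$, $\alpha_2=\angle C$ of interior angles lying on the side of $S^*$ bounded by the prolongation of $OA$; the companion bound $\angle B+\angle D<\pi$ then follows word for word after interchanging $A\leftrightarrow B$ and $C\leftrightarrow D$.

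The geometric heart of the argument is to locate the point $E$ at which the line through $O$ and $A$ meets the line through $C$ and $D$, and to show that $E$ lies on the ray $\overrightarrow{AO}$ beyond $O$ and strictly between $C$ and $D$. For this I would use that $C$ and $D$ lie outside $W$ (since $AC$ and $DB$ avoid the star region) while $CD$ meets $W$; hence, in general position, the segment $CD$ enters $W$ across one bounding ray and leaves across the other, so it meets the prolongation of $OA$ at an interior point of $CD$, and that point is $E$. From ``$E$ on $\overrightarrow{AO}$ beyond $O$'' we get $\overrightarrow{AE}=\overrightarrow{AO}$, so the angle of triangle $ACE$ at $A$ equals the pentagon's interior angle $\angle A$; from ``$E$ strictly between $C$ and $D$'' we get $\overrightarrow{CE}=\overrightarrow{CD}$, so the angle of triangle $ACE$ at $C$ equals $\angle C$. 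Since $O$, $A$, $C$ cannot be collinear in a non-degenerate pentagon, triangle $ACE$ is non-degenerate and its angle at $E$ is positive, whence $\angle A+\angle C=\pi-\angle AEC<\pi$.

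I expect the main obstacle to be precisely this placement of $E$: one must invoke the simple-connectedness argument from Lemma~\ref{lemma1} to be sure that $CD$ is the edge traversing $W$ and that $C$ and $D$ sit in the two quadrants flanking $W$, and then dispose of the degenerate cases (for instance $CD$ parallel to the line $OA$, or $O$, $A$, $C$ collinear), which I would rule out by a general-position hypothesis or treat as limiting situations in which $\angle AEC\to 0$ would force a straight angle somewhere in the pentagon. Once $E$ is correctly located, the triangle angle sum finishes the first inequality, and the mirror argument — using the prolongation of $OB$, the line $CD$, and the edge $DB$, with the analogous crossing point $F$ between $C$ and $D$ — finishes the second.
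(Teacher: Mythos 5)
Your proof is correct and is essentially the paper's argument: both hinge on the same WLOG configuration from Lemma~\ref{lemma1} (the crossing edge traverses the star wedge and meets the prolongation of a reflex-vertex edge), and both finish by an elementary angle chase at that intersection point — the paper via a quadrilateral angle sum of $2\pi$ combined with a triangle sum, you by reading the two pentagon angles directly as two angles of the triangle $ACE$ and using $\angle AEC>0$. The placement of $E$ that you flag as the main obstacle is likewise only asserted (via the quadrant argument and the figure) in the paper, so your treatment is no less rigorous on that point.
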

\begin{proof}
	Figure~\ref{star-region-proof} shows a pentagon with one reflex vertex and two edges that do not intersect with the star region ($AB$ and $CD$). Extending two edges corresponding to the reflex vertex, separates the entire space into four regions, one of which contains the star region. $OA$ and $OD$ are used to draw a reflex vertex, and other edges should be on second and fourth quadrants, while the first quadrant contains the star region. Therefore, without loss of generality, every pentagon containing one non-intersecting edge with star region, can be considered as Figure~\ref{star-region-proof}. From Figure~\ref{star-region-proof},
	\begin{eqnarray} \nonumber
	\alpha_1+\alpha_2 + \beta_1+\beta_2= 2\pi,\\ \nonumber
	\alpha_1+\alpha_2 + \gamma_1+\gamma_2+\gamma_3+\gamma_3 =2\pi.
	\end{eqnarray} 
	Since $\gamma_1+\gamma_2+\gamma_3 =\pi$ and $\gamma_3 > 0$, $\alpha_1+\alpha_2 < \pi$. The same argument holds for $\alpha_1^\prime+\alpha_2^\prime < \pi$.
\end{proof}

\begin{figure}[!tbp]
  \centering
  \subfloat[]{\includegraphics[width=0.25\textwidth]{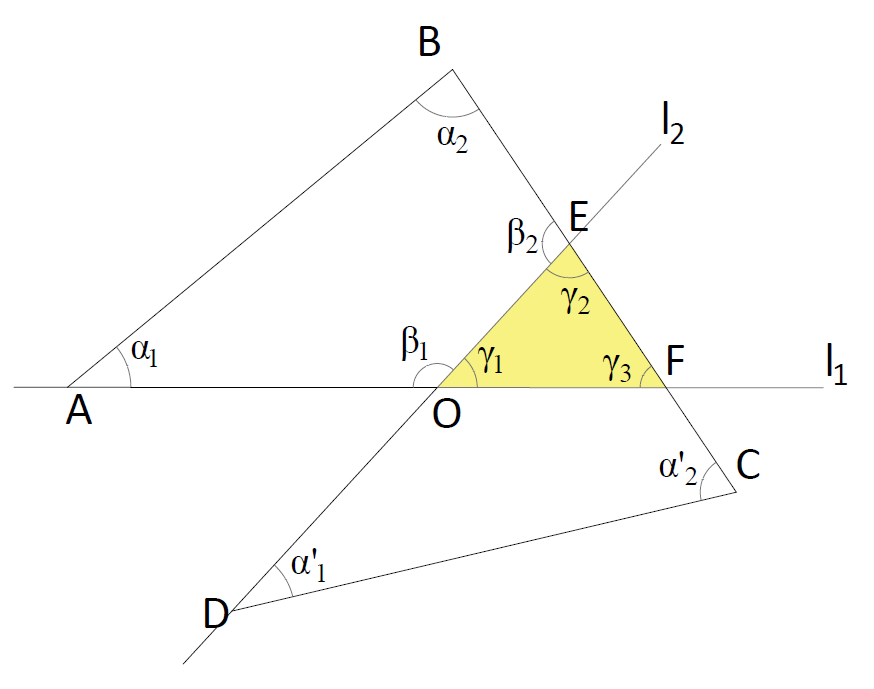}\label{star-region-proof}}
  \hfill
  \subfloat[]{\includegraphics[width=0.2\textwidth]{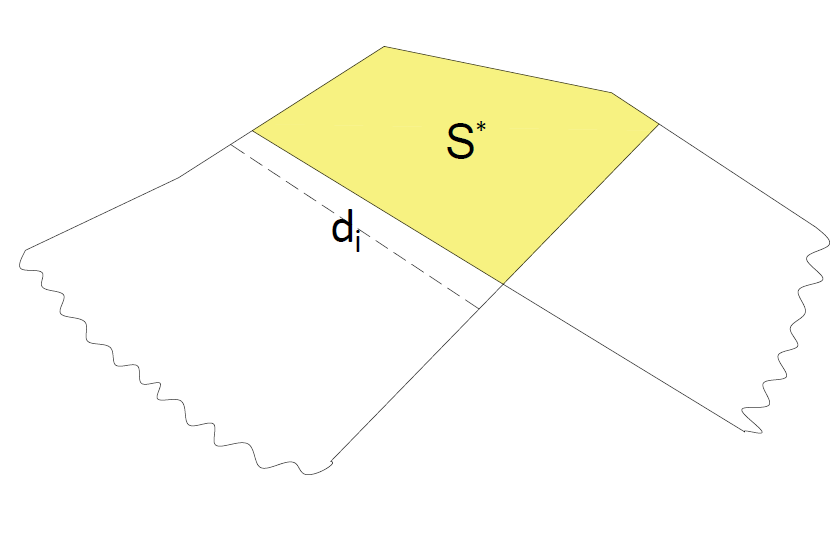}\label{star-region-proof2}}
  \caption{(a) A pentagon with one reflex vertex and two non- intersecting edges with the star region.(b)Line segment $d_i$ to partition a polygon $P_n$. }
\end{figure}

\begin{prop} \label{number of star regions}
	Every polygon containing $2,3,4$ or $5$ disjoint star regions must have at least $6,7,9$ or $10$ edges, respectively.
\end{prop}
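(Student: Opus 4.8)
The plan is to prove the four inequalities separately, for $k=2,3,4,5$ disjoint star regions, arguing by contradiction from a polygon with fewer edges than claimed. The uniform starting point is a counting bound: $k$ pairwise-disjoint star regions are anchored at $k$ distinct reflex vertices $O_1,\dots,O_k$ (a star region is determined by its reflex vertex), and since the interior angles of a simple $n$-gon sum to $(n-2)\pi$ while each reflex angle exceeds $\pi$, we get $(n-2)\pi>k\pi$, i.e.\ $n\ge k+3$. So for $k=2,3$ one further edge must be extracted, and for $k=4,5$ two further edges. I will repeatedly use the following consequence of Lemma~\ref{lemma1}: in a polygon with at least four edges every star region meets at least three edges, so a star region missing $r\ge 2$ edges forces $n\ge r+3$, and one missing exactly one edge forces $n\ge 5$.

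For $k=2$ I would exclude the pentagon. Assume $P$ is a pentagon with reflex vertices $O_1,O_2$ and $S_1^*\cap S_2^*=\emptyset$; its three convex vertices then have interior angles summing to less than $\pi$, so two of the five edges form a narrow convex cap. By the observation above each $S_i^*$ misses at most two edges, so I split on whether $S_1^*$ misses $0$, $1$, or $2$ of them (and likewise $S_2^*$); the principal case is when each star region misses exactly the two edges incident to the other reflex vertex. Extending the two edges at $O_i$ past $O_i$ into the interior gives the wedge carrying $S_i^*$, and Lemma~\ref{lemma-2} (whose argument bounds, for a star region missing two edges, the two interior angles flanking its reflex vertex by $\alpha_1+\alpha_2<\pi$) shows that the two wedges are wide enough, and so placed, that they must overlap inside $P$ — contradicting disjointness. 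This yields $n\ge 6$.

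For $k=3,4,5$ I would first try to reduce to the already-settled cases by cutting $P$ along a diagonal that separates the star regions into two groups without crossing any of them; if this splits $P$ into $P_1$ and $P_2$ then $n(P_1)+n(P_2)=n+2$, and the bounds proved earlier give the claim with room to spare (isolating one region when $k=3$ yields $n\ge 6+4-2=8$; a $3{+}1$ split when $k=4$ yields $n\ge 7+4-2=9$; a $3{+}2$ or $4{+}1$ split when $k=5$ yields $n\ge 11$). The delicate situation is when no such separating diagonal exists, i.e.\ when the disjoint regions are so tightly interleaved along $\partial P$ that every diagonal joining two vertices between them is blocked; there one argues directly from the wedge geometry as in the $k=2$ case — now inside a hexagon, heptagon, octagon or nonagon — to force the one or two remaining edges, and it is this tight regime that yields exactly the stated bounds $7,9,10$.

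The main obstacle is precisely this last geometric step. Showing that disjoint star regions must overlap in the extremal small polygons requires a careful enumeration of the positions of the reflex vertices and of which edges each wedge can avoid, with Lemma~\ref{lemma1} controlling how many edges a wedge can miss and Lemma~\ref{lemma-2} constraining how narrow a wedge can be. The parity pattern in the bounds — one extra edge for $k=2,3$ but two for $k=4,5$ — is expected to emerge from the fact that a pair of nearby reflex vertices with disjoint star regions costs one additional edge separating them, so that grouping the $k$ regions into $\lfloor k/2\rfloor$ such pairs accounts for precisely the $\lfloor k/2\rfloor$ edges beyond the $k+3$ from the angle count.
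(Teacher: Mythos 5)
Your write-up is a plan rather than a proof, and the decisive steps are exactly the ones left open. For $k=2$ you only sketch the pentagon exclusion: you isolate a ``principal case'' and then assert that Lemma~\ref{lemma-2} makes the two wedges ``wide enough, and so placed, that they must overlap inside $P$,'' but no such overlap argument is given, and Lemma~\ref{lemma-2} as stated is a necessary angle condition for a pentagon with \emph{one} reflex vertex and two cut-off edges -- it does not directly say anything about two wedges in a pentagon with two reflex vertices, so the repurposing itself needs proof. For $k=3,4,5$ your separating-diagonal reduction only covers the easy regime (and even there it has unverified details: a vertex-to-vertex diagonal may terminate at a reflex vertex and change its incident edges, so you must check that the reflex vertices stay reflex and that the restricted star regions are still disjoint star regions of the sub-polygons); by your own admission the tight bounds $7,9,10$ must come from the ``no separating diagonal'' regime, which you defer to an unspecified ``direct wedge geometry'' enumeration. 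Since that regime is where the content of Proposition~\ref{number of star regions} lives, the proposal as written does not establish the statement.

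For contrast, the paper avoids this dead end by never insisting on vertex-to-vertex diagonals: it cuts $P$ with segments $d_1,\dots,d_{n_d-1}$ (endpoints allowed in the interiors of edges) chosen so that each piece contains exactly one star region -- such cuts always exist once the star regions are disjoint. The dual graph of the pieces is a tree; Lemma~\ref{lemma1} lower-bounds each piece's edge count by its tree degree (a piece with $r\ge 2$ cut edges missing its star region has at least $r+3$ edges, and at least $5$ when $r=1$), Lemma~\ref{lemma-2} is used to forbid two \emph{adjacent} degree-two pieces from both being pentagons (this is what forces $9$ rather than $8$ for $k=4$ and $10$ for $k=5$), and the final bound follows from an edge count over the tree that discounts the shared cuts and the boundary edges they split. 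If you want to salvage your route, you would either have to carry out the wedge enumeration in the non-separable small polygons in full, or switch to interior cuts as the paper does, at which point your angle-count opening becomes unnecessary.
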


\begin{proof}
	We consider a $n$-sided polygon $P$, and let $n_d \in \{2,3,4,5\}$ be the number of disjoint star regions. Each star region is a convex polygon with two edges inside $P$, and only one common vertex with reflex vertices of $P$ and some parts of the boundary of $P$ (yellow region in Figure~\ref{star-region-proof2}). Based on the definition of a star region, we can find line segments $d_1,\dots,d_{n_d-1}$ which partition $P$ into $n_d$ parts, and each part contains one star region. Figure~\ref{star-region-proof2} shows a plausible choice of $d_1$. Now, let $G(v,e)$ be a dual graph for $P$ such that each partition with only one star region is a node in $G$. An edge exists between a pair of nodes if their corresponding partitions are separated by a common $d_i$. Since $P$ is simply connected, graph $G$ is connected. Moreover $G$ has $n_d$ nodes and $n_d-1$ edges. Therefore, $G$ is a tree. All possible non-isomorphic trees with $n_d \in \{2,3,4,5\}$ nodes are illustrated in Figure~\ref{octagon-graph}. 
	
	Each node with degree one corresponds to a polygon with a reflex corner and a non-intersecting edge with star region. As a result of lemma~\ref{lemma1}, each node with degree one must be a polygon which contains at least $5$ edges. Additionally, nodes with degree $3$ and $4$ must be hexagon and septagon, respectively. Based on lemma~\ref{lemma-2}, two consecutive nodes with degree two cannot both be a pentagon, which implies one of the partitions must contain at least $6$ edges. In order to find the total number of edges of $P$, we have to exclude the common edges in partitions (i.e. $d_i$'s). Moreover the edges belonging to $P$ which intersect $d_i$'s should be counted only once. Computing the total number of edges for the graphs for $n_d =2,3,4$ and $5$, leads to $n=6,7,9$ and $10$, respectively. Therefore, a hexagon, a septagon and an octagon can at most contain $2, 3$ and $3$ disjoint star regions, respectively.
		   
	   \begin{figure}
	   \centering
	   \includegraphics[scale=.3]{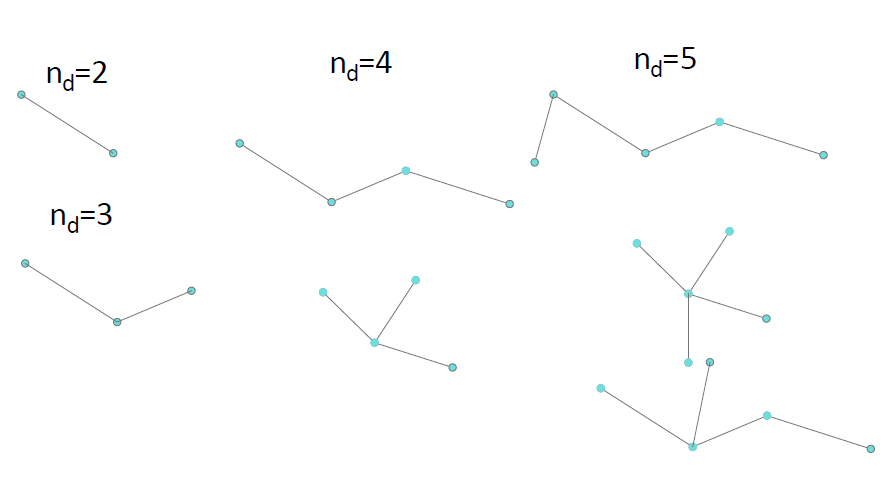}
	   \caption{Dual graph of $P$ for $n_d=2,3,4,5$. }
	   \label{octagon-graph}
	   \end{figure}		   

\end{proof}
\subsection{Tracking in a Hexagon}
In this section, we show that every hexagonal environment can be guarded by a mobile guard if the maximum speed of guard is greater than or equal to a threshold, which depends on the geometry of the environment and maximum speed of evader (i.e. $\bar{v}_p \geq v^*_p$ ). First, we propose an initial deployment of the pursuer for different initial positions of the evader based on the number of disjoint star regions in the environment, and then we present a strategy for the pursuer's motion. Based on the number of reflex vertices, every hexagon can be classified into the following categories:
\begin{itemize}
\item Case I: Convex hexagon (no reflex vertex).
\item Case II: Hexagon with one reflex vertex.
\item Case III: Hexagon with two reflex vertices.
\item Case IV: Hexagon with three reflex vertices.
\end{itemize}

In Case I, a static guard can cover the entire hexagon since it is convex. In case II, deploying only one static guard in $S^*$ can cover the entire hexagon. 

In case III, let $\gamma_p$ denotes the smallest path within the hexagon which connects $S^*_1$ and $S^*_2$. $S^*_1 \cap S^*_2 \neq \emptyset$ implies $|\gamma_p| =0$, and there exists at least one point in $S^*_1 \cap S^*_2$ which can cover the entire environment. Deploying one static guard in $S^*_1 \cap S^*_2$ would be enough to cover the hexagon. When $|\gamma_p| \neq 0$, the environment can be partitioned into three parts, which determines the strategy of mobile guard. This partitioning is based on defining a \textit{dynamic zone} denoted by $D_i$ around a reflex vertex $C_i$. 

Refer to Figure~\ref{dynamic-zone-1}. Assume the pursuer lies on point $H_0$. Let $l_i$ be the ray that begins from vertex $C_i$ and it lies in the area of visibility of $H_0$ (i.e. $l_i \in {\cal V} {(H_0)}$). For any line segment $d_i:=H_0H_1$ which connects $H_0$ to $S_i^*$, dynamic zone is defined as the set of all points in the environment whose distance from $l_1$ is less that $r_i=\frac{\bar{v}_e}{\bar{v}_p}d_i$. The green area in Figure~\ref{dynamic-zone-1} shows the dynamic zone. In general, $D_i$ encompasses a sector of a disc with radius of $r_i$, and the area between two parallel lines, one of which tangents to the disc and the other one passes through $C_i$. When the evader enters region $D_i$, the pursuer initiates motion from $H_0$ towards $H_1$ along the path $d_i$. The boundary of $D_i$ contains two main parts, $\partial D_i$ and $l_i$ belong to the free space.  $\partial D_i$ and $l_i$ correspond to the points $H_0$ and $H_1$, which implies that if the evader enters $D_i$ from $\partial D_i$ or $l_i$ then the pursuer should lie on $H_0$ and $H_1$, respectively. Based on the distance of the evader with respect to $\partial D_i$ and $l_i$, the pursuer should be on a certain position on $d_i$. In other words, there exists an isomorphic mapping from $D_i$ to $d_i$.
 
We can extend the concept of dynamic zone to the environments containing multiple reflex corners. First, we construct a road map which connects all star regions. Figure~\ref{dynamic-zone-3} shows a road maps which connects three star regions. Next, based on the road map we construct the dynamic zones around each corner. The dynamic zones for a certain road map should satisfy the following conditions:
\begin{figure}[!tbp]
  \centering
  \subfloat[]{\includegraphics[width=.25\textwidth]{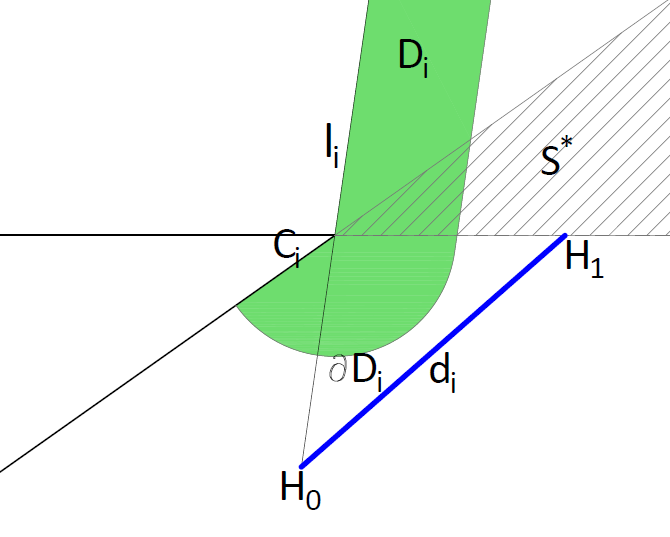}\label{dynamic-zone-1}}
  \hfill
  \subfloat[]{\includegraphics[width=.25\textwidth]{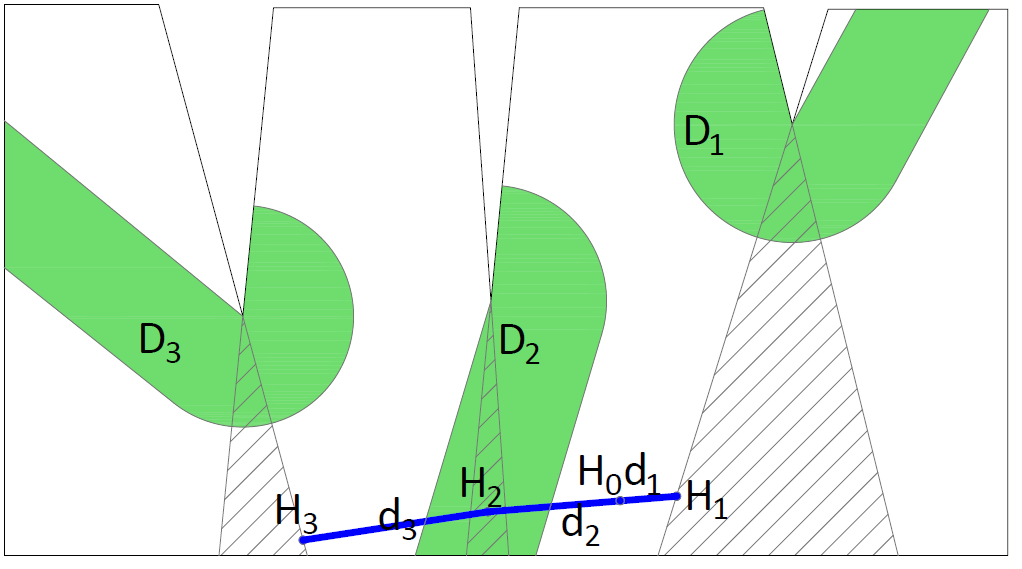}\label{dynamic-zone-3}}
  \caption{ (a) Dynamic zone around corner $C_i$,(b)Dynamic zones and road map among star regions}
\end{figure}
\begin{itemize}
\item Let $D_i$ and $D_j$ be two dynamic zones which are separated by a static zone. $D_i$ and $D_j$ should be consistent. By consistent we mean, two boundaries of $D_i$ and $D_j$ which are connected to the same static zone should correspond to the same point according to the pursuer's position.
\item Every two dynamic zones cannot contain common points except some points on their boundaries. 
\item Every $D_i$ cannot contain a reflex vertex except $C_i$.
\end{itemize}

\begin{figure}[!tbp]
  \centering
  \subfloat[]{\includegraphics[width=.25\textwidth]{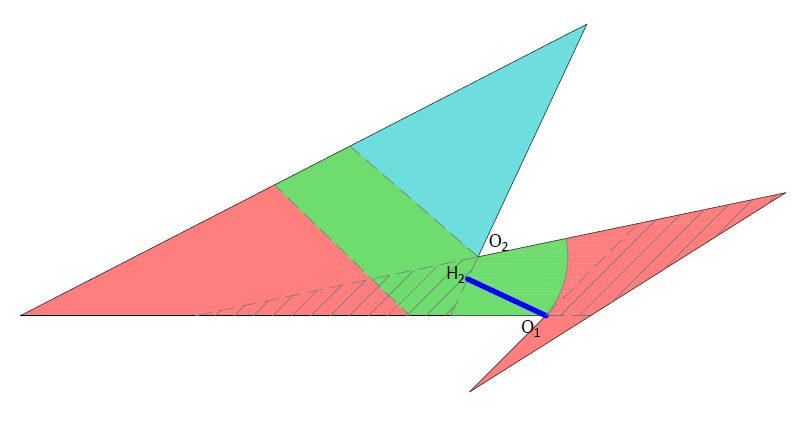}\label{hexagon_case_I}}
  \hfill
  \subfloat[]{\includegraphics[width=.2\textwidth]{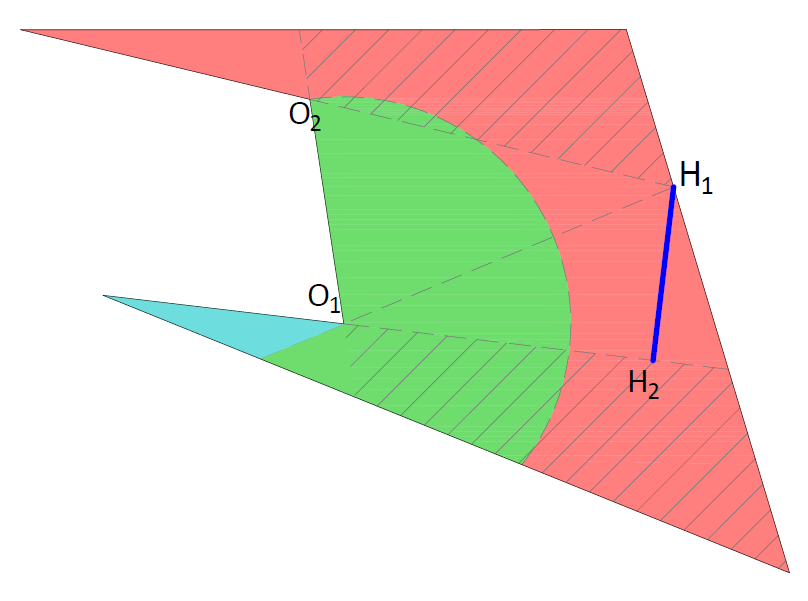}\label{hexagon_case_II}}
  \hfill
  \subfloat[]{\includegraphics[width=.3\textwidth]{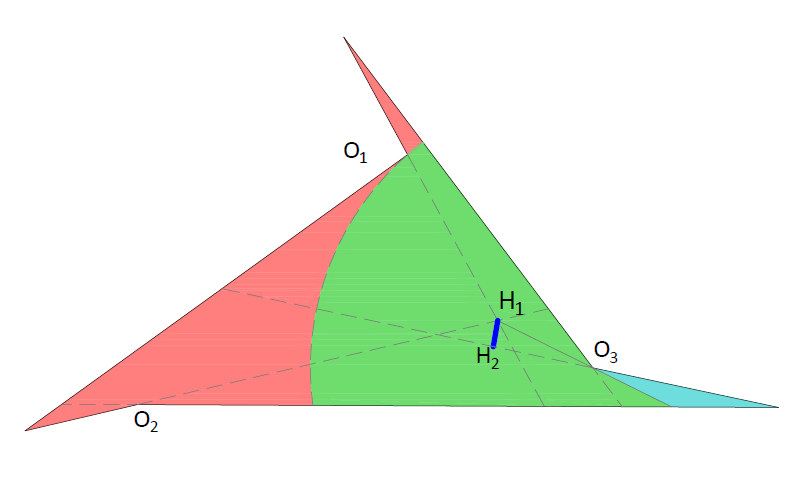}\label{hexagon_case_III}}
  \caption{Hexagonal environment (a),(b) Case III , c) Case IV. Evader's position in every colored partition, determines specific deployment and strategy of pursuer.}
\end{figure}

When a hexagon contains two disjoint star regions, the maximum radius of dynamic zone is the distance between two reflex vertices $o_{12}$. Therefore, the minimum $\bar{v}_p$ is $\dfrac{|\gamma_p|}{r_{12}}\bar{v}_e$.

In case IV, the environment has three reflex corners. If $S^*_1 \cap S^*_2 \cap S^*_3 \neq \emptyset$, the entire hexagon can be covered by only one static guard. For environments in which $S^*_1 \cap S^*_2 \cap S^*_3 = \emptyset$ (Figure~\ref{hexagon_case_III}), $\gamma_p$ is defined as the shortest path which connects the intersection set of two star regions. That is the shortest path between $S^*_i \cap S^*_j$ for $i \neq j, i,j=1,2 ,3$. Given $\gamma_p$, the deployment and motion strategy of the pursuer can be determined in the same way as described for case III. The following proposition summarizes the results in this subsection.

\begin{prop}\label{max-speed}
A mobile guard can track an evader in a hexagonal environment for infinite time if the following condition is satisfied:
\begin{eqnarray}
\bar{v}_p &\geq& v^*_p\\
v^*_p &=& \dfrac{|\gamma_p|}{r}\bar{v}_e,
\end{eqnarray}
where $\gamma_p$ is the shortest path which connects all star regions, and $r$ is the radius of arc segment in $D_i$.  
\end{prop}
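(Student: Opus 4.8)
The plan is to verify that the deployment and event\nobreakdash-triggered strategy described above maintain the invariant that the evader always lies in the visibility region of the guard, and that executing this strategy never requires the guard to move faster than $\bar v_p$ whenever $\bar v_p\ge v^*_p$.

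First I would dispose of the easy cases already sketched above: in Case~I a single static guard covers the convex hexagon; in Case~II a static guard in $S^*$ covers it; and in Cases~III and IV, whenever the relevant star regions share a point, a static guard placed there covers the hexagon. In all of these $|\gamma_p|=0$ and any $\bar v_p\ge 0$ works, so the bound holds vacuously. This reduces the problem to the situation in which the star regions along the road map are pairwise disjoint; by Proposition~\ref{number of star regions} a hexagon contains at most two disjoint star regions, so $\gamma_p$ is a single shortest path joining $S_1^*$ to $S_2^*$ (in Case~IV, joining the pairwise intersections $S^*_i\cap S^*_j$), and the road map is threaded by finitely many dynamic zones $D_1,\dots,D_m$ around the reflex corners $C_i$ that lie on $\gamma_p$.

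Next I would define the guard's response map $g:P_i\to\gamma_p$: on each static zone $g$ is the fixed point from which that zone is visible, and on each $D_i$ it is the isomorphic mapping $D_i\to d_i$ determined by the evader's distance to $l_i$ versus $\partial D_i$ (so $\partial D_i\leftrightarrow H_0$ and $l_i\leftrightarrow H_1\in S_i^*$). Using the three consistency conditions imposed on the dynamic zones — corresponding boundaries across a shared static zone match, interiors are disjoint, and no $D_i$ contains a reflex vertex other than $C_i$ — I would argue that $g$ is well defined and continuous on the whole hexagon, and that $g(x)$ always sees $x$: inside a static zone this is the definition, and inside $D_i$ it follows because $l_i\in\mathcal V(H_0)$ and sliding the guard along $d_i$ sweeps the line of sight through $C_i$ across exactly the sector-and-strip region comprising $D_i$, so visibility of the far side of $C_i$ is handed off continuously to the guard's new position. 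Once the guard reaches $H_1\in S_i^*$ it sees the entire star region.

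The heart of the argument — and the step I expect to be the main obstacle — is the speed estimate: for any evader trajectory $x(t)$ with $\|\dot x\|\le\bar v_e$, the induced guard trajectory $g(x(t))$ must satisfy $\|\frac{d}{dt}g(x(t))\|\le\bar v_p$. Since $g$ is constant on static zones it suffices to bound the Lipschitz constant of $g$ on each $D_i$. The design equation $r_i=\frac{\bar v_e}{\bar v_p}d_i$ is precisely what makes this work: the guard needs time $d_i/\bar v_p$ to traverse $d_i$, during which the evader advances at most $\bar v_e\,d_i/\bar v_p=r_i$, so $D_i$ is ``thick enough'' that the trigger $\partial D_i$ gives the guard exactly the warning distance it needs; equivalently, a transverse evader displacement of length $\delta$ moves the guard by at most $\frac{d_i}{r_i}\delta=\frac{\bar v_p}{\bar v_e}\delta$, and in the arc-shaped part of $D_i$ the worst ratio is controlled by the arc radius. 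Taking the minimum arc radius $r$ over all $D_i$ and summing path lengths to $|\gamma_p|$ yields the worst-case requirement $\bar v_p\ge\frac{|\gamma_p|}{r}\bar v_e=v^*_p$. Care is needed for evader motions that graze $\partial D_i$ or $l_i$, or that cross the seam between two dynamic zones; here one uses the consistency conditions to check that $g$ remains Lipschitz across the seams with no larger constant. Since the invariant ``the evader is visible to the guard'' is then preserved for all $t\ge 0$ and is realizable with guard speed at most $\bar v_p$, the mobile guard tracks the evader for infinite time, which proves the proposition.
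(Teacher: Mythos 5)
Your proposal is correct and follows essentially the same route as the paper, which does not give a formal proof but lets the preceding dynamic-zone construction (the isomorphic map from $D_i$ to $d_i$, the design relation $r_i=\frac{\bar v_e}{\bar v_p}d_i$, and the case analysis on the number and intersection of star regions) stand as the argument. Your writeup simply makes explicit the visibility invariant and the Lipschitz/speed estimate that the paper leaves implicit, arriving at the same bound $v^*_p=\frac{|\gamma_p|}{r}\bar v_e$.
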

\subsection{Tracking in a Septagon} 
In this subsection, we analyze the deployment and motion strategy for a single guard in a septagonal environment. Similar to the hexagonal environments, we can classify septagons according to the number of reflex corners, which is at most five.
\begin{itemize}
\item Case I: Convex septagons, which can be covered with one static guard.
\item Case II: Septagons containing only one reflex vertex, which can be covered with only one static guard which is placed in $S^*$.
\item Case III: Septagons containing two reflex vertices. When star regions are completely disjoint, a mobile guard with a maximum speed, given by proposition~\ref{max-speed} is needed. Otherwise one static guard is enough to cover the entire environment.
\item Case IV: Septagonal environments in which three reflex corners exist. If intersection of all star regions is not empty set (i.e. $S^*_1 \cap S^*_2 \cap S^*_3 \neq \emptyset$), then deployment of a static guard in $S^*_1 \cap S^*_2 \cap S^*_3$ is enough to cover the entire environment. Now consider the case in which two star regions intersect, $S^*_1 \cap S^*_2 \neq 0$, but third one ($S^*_3$) is disjoint (i.e. $S^*_1 \cap S^*_3 = 0,S^*_2 \cap S^*_3 = 0$). In such cases, a mobile guard on the shortest path between $S^*_1 \cap S^*_2$ and $S^*_3$ can track an evader for infinite time. 

When all three star regions are disjoint (i.e. $S^*_i \cap S^*_j = 0, i \neq j, i,j:1,2,3$), one mobile guard can track an evader. In this case, we partition the environment based on the motion of pursuer on the shortest path, which connects the furthest star regions. Figure~\ref{Septagon_model} shows a septagon and three disjoint star regions. The pursuer's trajectory contains two paths $H_1O_1$ and $O_1H_2$. In this case $v^*_p$ is given by the following expression:
\begin{eqnarray}
v^*_p = \mathop {\min }\limits_{r_1,r_2} {\max\{\frac{d_1\bar{v}_e}{r_1},\frac{d_2\bar{v}_e}{r_2}\}},
\end{eqnarray}
where $d_1$ and $d_2$ are lengths of $H_1O_1$ and $O_1H_2$, receptively. $r_1$ and $r_2$ are the radius of sector of dynamic zones around $O_1$ and $O_2$, respectively. Note that $r_1$ and $r_2$ are subject to the following constraints:
\begin{eqnarray}
r_1 \leq o_{12}, \qquad r_2 \leq o_{13}, \qquad r_1+r_2 \leq o_{23}.
\end{eqnarray}
Since $\frac{d_1\bar{v}_e}{r_1},\frac{d_2\bar{v}_e}{r_2}$ are decreasing function of $r_1,r_2$, $v^*_p$ occurs at the boundary of one of the constraints.

\end{itemize} 

In all other cases, we can construct a road map among the intersection of star regions and disjoint star regions. The lower bound for $\bar{v}_p$ is $\min \max \{\frac{d_i}{r_i}\bar{v}_e\}$.

According to the proposition~\ref{number of star regions}, octagons have at most three disjoint star regions. Hence, all the above cases hold for general octagonal environments.
\begin{figure}[!tbp]
  \centering
  \subfloat[]{\includegraphics[width=.25\textwidth]{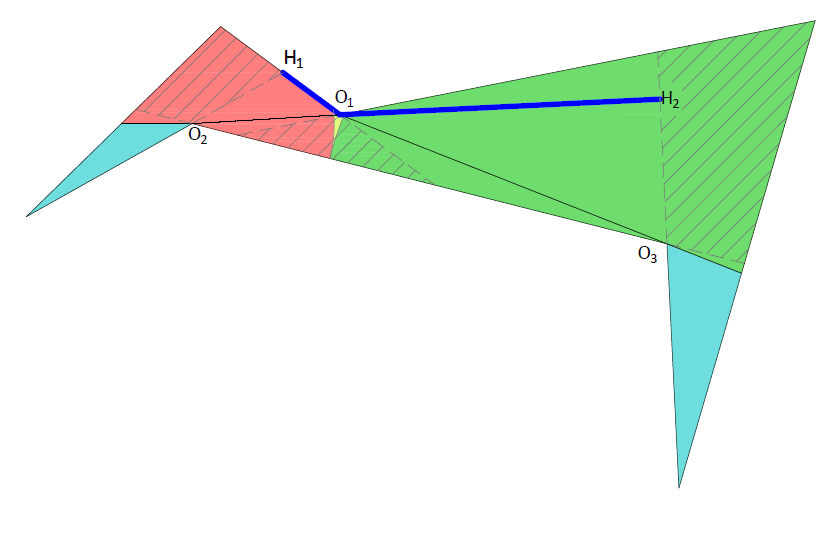}\label{Septagon_model}}
  \hfill
  \subfloat[]{\includegraphics[width=.22\textwidth]{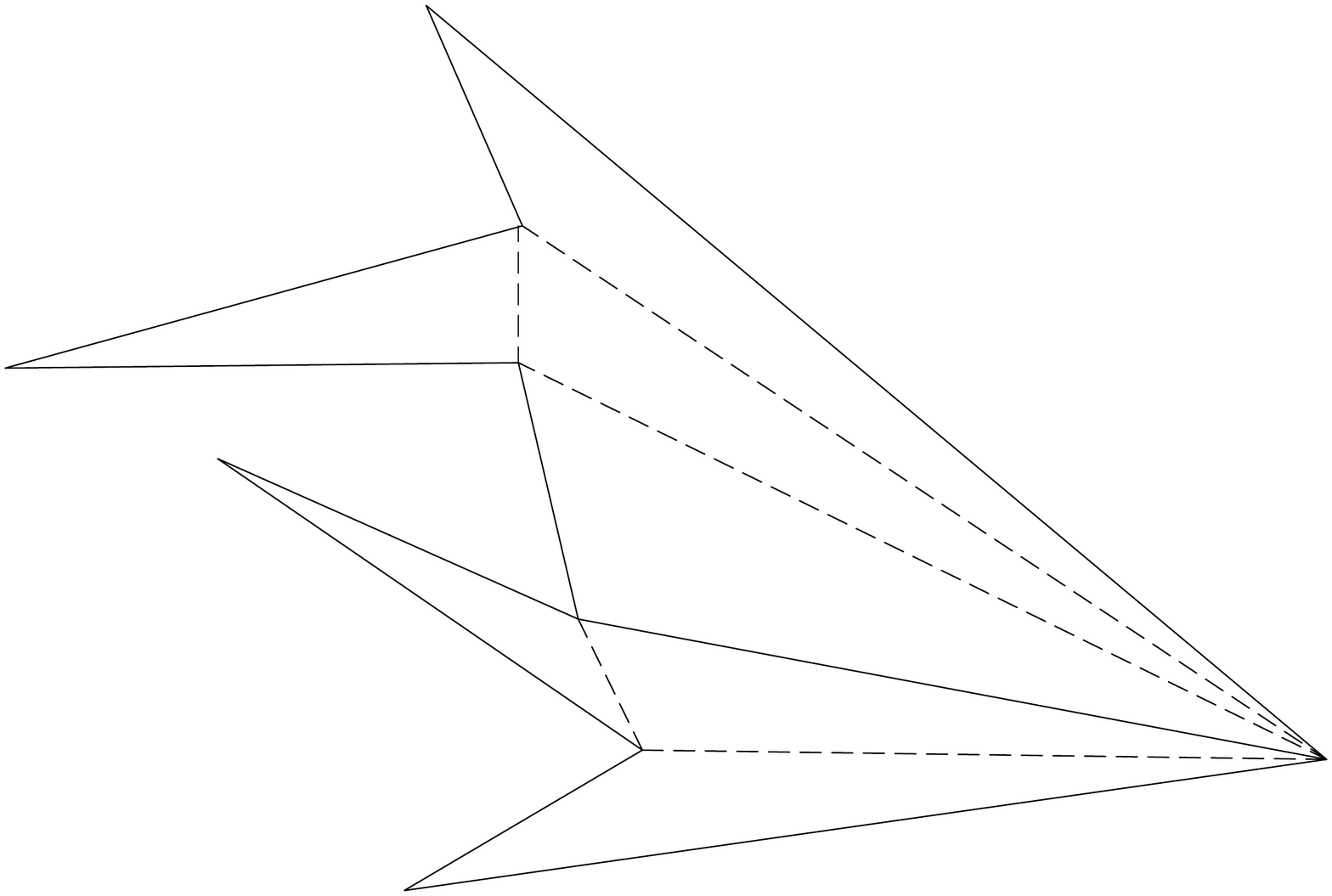}\label{Nonagon}}
  \caption{(a) Septagonal environment,(b) Partitioning 9-gon into hexagon and pentagon}
\end{figure}
\noindent

\subsubsection*{Deployment of a guard in a hexagon, septagon and octagon}
As mentioned earlier, only one guard is enough to track an evader within a hexagon, a septagon or an octagon. To deploy a guard, we need to identify all star regions in these polygons, and subsequently partition the polygon based on the number of star regions and maximum speed of evader. Deployment of the guard depends on the position of evader with respect to the partitions. Moreover, $v^*_p$ is computed based on $\bar{v}_e$ and geometry of the star regions. As the evader moves inside the polygon, the guard should adopt an appropriate strategy with respect to the evader's position in the partitions. 
\subsection{Tracking in a Nonagon}
In this subsection, we show that two guards, one static and one mobile guard, is enough to track an evader in a 9-gon. In every triangulation of a 9-gon, we can find a diagonal such that it partitions the environment into two parts, one of which is a pentagon and the other one is a hexagon. Deploying a static guard is enough to cover the pentagon, and a mobile guard can track the evader within the hexagon for infinite time. Hence, two guards are sufficient to guard a 9-gon.

Now we have shown that hexagons, septagons, octagons and nonagons can be guarded by static or mobile guards such that the number of required guards is strictly less than the number of required static guards to cover these environments. Therefore, according to proposition~\ref{prop mobile guards}, any general polygon can be guarded with group of $n_g$ static and mobile guards such that  $n_g < \lfloor\frac{n}{3}\rfloor$. The procedure of deploying guards is based on the minimal partitioning of polygon $P$ into hexagons, septagons, octagons and nonagons. Each partition is guarded by one or two guards based on the number of reflex vertices and the relative placement of star regions. Algorithm~\ref{algorithm-general-polygon} presents the procedure of deploying guards in a general polygon. Figure~\ref{algorithm-general-polygon-figure} shows minimal partitioning of a 20-gon and deployment of guards for infinite time tracking of an evader. Partitioning and motion strategy of mobile guards, based on the intruder's trajectory, are shown in our video submission.

	\begin{algorithm}
	\caption{Infinite time tracking of the intruder in a general polygon.}\label{algorithm-general-polygon}
	\begin{algorithmic}[1]
	\Procedure{Deploying guards }{$P,e_0$}\\
	\textbf{Input:} $P$ is a polygon, $e_0$ is the initial position of the intruder.
	\State Minimal partitioning $P$ to ${\cal{P}}=\{P_1,\dots,P_r\}$
	\State $v^*_p \gets {0}$
	\For  {\text{any}$P_i \in \cal{P}$}
	\If {$P_i$ is hexagon, septagon or octagon,} 
	\State Follow the deployment of a guard in a hexagon, septagon or octagon.

	\Else {}
	\State Follow the deployment of guards in a nonagon environment.
	\EndIf
	\State Compute ${v^*_p}_i$.
	\State $v^*_p \gets \max({v^*_p}_i,v^*_p)$
	\EndFor

	\EndProcedure
	\end{algorithmic}	
	\end{algorithm}

   \begin{figure}
   \centering
   \includegraphics[scale=.4]{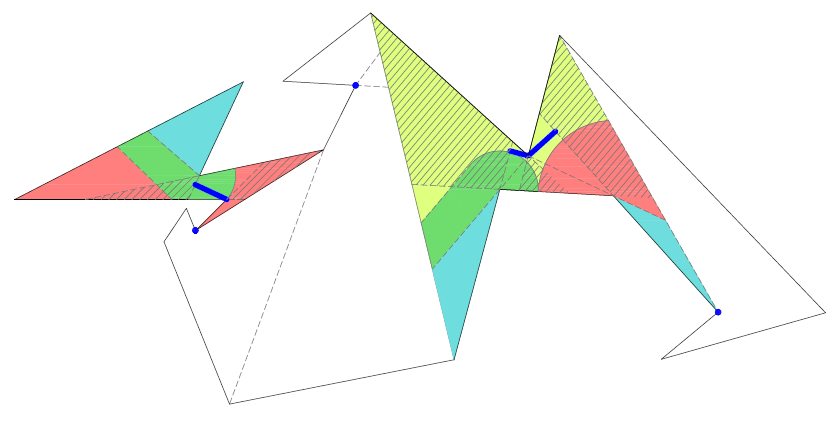}
   \caption{Deployment of guards in a 20-gon. Solid blue lines show the paths of mobile guards and blue dots are static guards.}
   \label{algorithm-general-polygon-figure}
   \end{figure}

\section{Target Tracking in Orthogonal Polygons} 
In this section, we present deployment strategies for guards in orthogonal polygons, which is an important subclass of polygons. An orthogonal polygon is one whose edges are all parallel to the axes of Cartesian coordinate. Therefore, all the angles of an orthogonal polygon are either $90^\circ$ or $270^ \circ$. We take all edges of $P$ to be horizontal and vertical without loss of generality.  We begin with the following lemmas.
\begin{lemma}
\cite{o1987art} Every orthogonal polygon $P$ is convexly quadrilateralizable.
\end{lemma}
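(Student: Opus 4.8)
The plan is to argue by induction on the number of vertices $n$ of the orthogonal polygon $P$, where ``convexly quadrilateralizable'' is taken to mean that $P$ admits a partition into convex quadrilaterals whose corners are vertices of $P$ and whose interior edges are diagonals of $P$. A preliminary observation, obtained by summing exterior turn angles around $\partial P$, is that an orthogonal polygon has exactly four more convex ($90^\circ$) vertices than reflex ($270^\circ$) vertices, so $n = 2r+4$ is even and the smallest case is $n=4$. \emph{Base case:} if $n=4$ then $P$ is a rectangle, hence itself a convex quadrilateral, and the one-piece partition $\{P\}$ works.

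\emph{Inductive step.} Assume $n\geq 6$, so $P$ has a reflex vertex. The natural approach is to produce an axis-parallel cut that splits $P$ into two smaller orthogonal polygons, quadrilateralize each by the inductive hypothesis, and glue the two partitions along the cut (the cut becomes a shared diagonal edge, and the quadrilateral counts add up correctly, since a chord-cut of an orthogonal polygon yields two orthogonal polygons, each with at least $4$ and at most $n-2$ vertices). To build the cut, take a reflex vertex $v$ and extend one of its two incident edges past $v$; since $P$ is compact and simply connected this ray re-meets $\partial P$, and at the first such point $w$ we obtain a horizontal or vertical chord $c=vw$ whose relative interior lies in $\mathrm{int}\,P$. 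If $w$ happens to be a vertex of $P$, then cutting along $c$ does the job and induction closes the step.

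\emph{The main obstacle} is that $c$ need not end at a vertex: $w$ may lie in the relative interior of an edge, in which case $c$ introduces a Steiner point, and, moreover, since $c$ is the extension of an edge at $v$, the vertex $v$ becomes a ``flat'' ($180^\circ$) vertex in one of the two pieces. Recursing naively would then produce a partition whose quadrilateral corners include points that are not vertices of $P$ --- exactly what the statement (and the downstream $\lfloor n/4\rfloor$ guarding argument, which needs the corner set to have size $n$) forbids. Handling this is the technical heart of the proof, and there are two standard routes. One strengthens the inductive hypothesis to the class of orthogonal polygons that may carry flat vertices, carrying along a bookkeeping invariant guaranteeing that every flat vertex and every Steiner point introduced along the way is ultimately ``straightened,'' i.e.\ ends up in the relative interior of a side of some convex quadrilateral of the final partition rather than at a corner; this forces the cut at each stage to be chosen so that collinear chords and their Steiner endpoints pair up. The other route abandons axis-parallel cuts and instead proves that $P$ always has a genuine vertex-to-vertex diagonal (possibly slanted) that severs a convex quadrilateral; this keeps the corner set correct but leaves the remaining piece outside the orthogonal class, so the inductive hypothesis must again be broadened (e.g.\ to orthogonal polygons with a controlled number of non-axis-parallel edges). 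In either case the crux is a finite but delicate case analysis at the chosen reflex vertex --- over the placement of neighbouring reflex vertices and whether incident edges are collinear with edges lying across the polygon --- showing that a size-reducing cut of the required type always exists; once that is secured, the gluing and the counting of quadrilaterals are routine.
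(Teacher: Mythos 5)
There is a genuine gap, and you have in fact named it yourself without closing it. The entire content of this lemma lies in the step you defer to ``a finite but delicate case analysis'': showing that every orthogonal polygon with $n\geq 6$ vertices admits a size-reducing cut that (i) severs off a convex quadrilateral (or a smaller orthogonal piece) and (ii) uses only vertices of $P$ as corners, with no Steiner points or flat vertices surviving into the final partition. Your first route (strengthening the induction to polygons with flat vertices plus a ``straightening'' bookkeeping invariant) and your second route (finding a genuine vertex-to-vertex, possibly slanted, diagonal cutting off a convex quadrilateral) are both merely described, not carried out; neither the invariant nor the existence of such a diagonal is established, and the case analysis over reflex-vertex configurations that you call the ``crux'' is exactly the nontrivial part of the theorem. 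As written, the argument proves only the easy facts (parity of $n$, the rectangle base case, that a naive edge extension from a reflex vertex gives an axis-parallel chord with a possible Steiner endpoint), and then asserts the hard part.

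It is worth being explicit that this hard part is not routine: convex quadrilateralizability of orthogonal polygons is the Kahn--Klawe--Kleitman theorem, and the known proofs (the one reproduced in O'Rourke's book, which is all the paper cites, as it gives no proof of its own) require a substantially more intricate argument than ``extend an edge at a reflex vertex and recurse'' --- precisely because, as you observed, such cuts do not stay within the class of partitions whose quadrilateral corners are vertices of $P$. A correct write-up would have to either reproduce that case analysis in full or honestly reduce the lemma to the cited result; in its current form the proposal is a proof plan with its central lemma missing, not a proof.
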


Given the quadrilateralization of an orthogonal polygon, we can associate a graph similar to the dual graph of the triangulation of a polygon. Let $G$ be a graph such that every vertex corresponds to a quadrilateral in quadrilateralization $Q$ of $P$, and two vertices share an edge if their corresponding quadrilaterals share an edge.
\begin{lemma}
\cite{o1987art} For every quadrilateralization $Q$ of an orthogonal polygon $P$, the dual graph $G$ is a tree with each node of maximum degree 4.
\end{lemma}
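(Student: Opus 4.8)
The plan is to verify the two assertions in turn: that $G$ is connected and acyclic, hence a tree, and that each of its nodes has degree at most $4$. Throughout I would use that a quadrilateralization $Q$ of $P$ is obtained by adding non-crossing chords between vertices of $P$ (with no Steiner points), so that the quadrilaterals tile $\overline{P}$ in an edge-to-edge fashion, and every side of a quadrilateral is either an edge of $P$ or one of the added chords, which I call a \emph{diagonal}.

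First I would establish connectedness. Since $P$ is simply connected and the quadrilaterals of $Q$ partition its closure, any two quadrilaterals $q,q'$ are joined by a finite chain $q = q_0, q_1, \dots, q_m = q'$ in which consecutive members share a diagonal: choose a path in the interior of $P$ from an interior point of $q$ to an interior point of $q'$ that meets the diagonals transversally and avoids their endpoints, and read off, in order, the diagonals it crosses. Each crossing corresponds to an edge of $G$, so this chain is a walk in $G$ between the corresponding nodes.

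Next comes acyclicity, which is the crux. The key observation mirrors the triangulation case: every diagonal $d \in Q$ is a chord of the simple polygon $P$, so by a Jordan-curve argument it splits $P$ into exactly two simple sub-polygons $P_d^1$ and $P_d^2$. No quadrilateral of $Q$ can straddle $d$, because $d$ is a full side of the (at most two) quadrilaterals incident to it and is disjoint from the relative interiors of all other quadrilaterals; hence each quadrilateral lies wholly in $P_d^1$ or wholly in $P_d^2$. Consequently the edge of $G$ associated with $d$ is a bridge: deleting it separates the nodes of quadrilaterals in $P_d^1$ from those in $P_d^2$. A connected graph all of whose edges are bridges has no cycle, so $G$ is a tree.

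Finally, for the degree bound, observe that a node of $G$ corresponds to a quadrilateral $q$ with exactly four sides; a side contributes an incident edge of $G$ only when it is a diagonal, in which case it abuts exactly one further quadrilateral of $Q$ (two interior-disjoint convex quadrilaterals meeting edge-to-edge share at most one full side, by a short case analysis on whether the shared sides are adjacent or opposite in $q$). Hence $q$ has at most four neighbours in $G$, so the maximum degree is $4$. The only step requiring genuine care is the acyclicity argument, and there the point to watch is that $Q$ is a bona fide non-crossing chord decomposition of a \emph{simple} polygon, which is exactly what licenses the chord-splitting and the bridge conclusion; the connectedness and degree claims are then routine.
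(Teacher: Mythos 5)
Your proof is correct: the connectedness argument, the chord-splitting/bridge argument for acyclicity, and the four-sides degree bound are all sound under the standard convention (which you state) that $Q$ is an edge-to-edge decomposition by non-crossing diagonals of the simple polygon $P$. Note that the paper itself offers no proof of this lemma---it is cited directly from \cite{o1987art}---and your argument is essentially the standard one given there for duals of triangulations, adapted verbatim to quadrilateralizations, so there is nothing to reconcile.
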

\begin{lemma} \label{partitioning orthogonal polygon}
The quadrilateralization $Q$ of $P$ can be partitioned by diagonals of $P$ into smaller partitions, $P_i$'s, such that each $P_i$ contains $2,3,$ or $4$ quadrilaterals. Furthermore, there may be one remaining quadrilateral, denoted by $P^\prime$.
\end{lemma}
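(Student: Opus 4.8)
The plan is to pass to the dual tree and reduce the statement to a separator property, in the same spirit as the proof of Lemma~\ref{partitiong-general-polygon}. By the preceding lemma the dual graph $G$ of the quadrilateralization $Q$ is a tree: its nodes are the quadrilaterals of $Q$, its edges are precisely the diagonals of $P$ used by $Q$, and every node has degree at most $4$. Cutting $P$ along a diagonal that appears in $Q$ separates $P$ into two sub-polygons, each carrying an inherited quadrilateralization whose dual tree is the corresponding component of $G$ after that edge is removed. So it suffices to prove the combinatorial statement: any tree $G$ on $N$ nodes with maximum degree at most $4$ can be cut, by successively deleting edges, into connected pieces each of size $2$, $3$, or $4$, with at most one leftover piece of size $1$.

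The heart of the argument is the following separator claim: if $N \ge 5$, some edge of $G$ has a component of size exactly $2$, $3$, or $4$ on one of its sides. To prove it, root $G$ at a leaf $\rho$ (which exists because $N \ge 2$); then $\rho$ has one child, and every other node uses one of its at most $4$ incident edges for its parent and hence has at most $3$ children, so every node of the rooted tree has at most $3$ children. For a node $v$ let $s(v)$ count the nodes of the subtree rooted at $v$. The set $\{v : s(v) \ge 2\}$ is nonempty since $s(\rho) = N \ge 5$; pick $v^\ast$ in it with $s(v^\ast)$ as small as possible. Each child $c$ of $v^\ast$ has $s(c) = 1$, since $2 \le s(c) < s(v^\ast)$ would contradict minimality; as $v^\ast$ is not a leaf and has at most $3$ children, $s(v^\ast) = 1 + (\text{number of children of } v^\ast) \in \{2,3,4\}$. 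Moreover $v^\ast \ne \rho$: otherwise $\rho$'s unique child $c$ would have $s(c) = 1$ and $N = s(\rho) = 2 < 5$. Hence $v^\ast$ has a parent, and deleting the edge joining them peels off exactly the subtree at $v^\ast$, a component with $s(v^\ast) \in \{2,3,4\}$ nodes.

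Granting the claim, the lemma follows by induction on $N$. If $N \le 4$ the polygon is already one admissible piece, where $N = 1$ is the permitted leftover quadrilateral $P^\prime$ and $N \in \{2,3,4\}$ needs no further cut. If $N \ge 5$, the claim yields a diagonal $d$ of $P$ splitting $P$ into a sub-polygon built from $2$, $3$, or $4$ quadrilaterals of $Q$ and a remainder $P^{(1)}$ built from $N - s(v^\ast) \le N-2$ quadrilaterals, whose dual tree is again a tree of maximum degree at most $4$; recurse on $P^{(1)}$. The number of quadrilaterals strictly decreases, so the process terminates; since every piece cut off on the way has at least $2$ quadrilaterals, only the final remainder can consist of a single quadrilateral, and that is the piece we call $P^\prime$.

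The one genuine obstacle is the separator claim, and its proof turns on choosing the pivot correctly: rooting at a \emph{leaf} guarantees every node has at most $3$ children, which caps $s(v^\ast)$ at $4$, while selecting $v^\ast$ of minimum subtree size among nodes with $s(v) \ge 2$ forces $s(v^\ast) \ge 2$. The remaining points --- the correspondence between edges of $G$ and diagonals of $P$, the fact that the pieces produced are unions of quadrilaterals of $Q$, and the termination of the recursion with at most one singleton --- are routine.
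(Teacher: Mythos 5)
Your proof is correct and takes essentially the same route as the paper: pass to the degree-at-most-$4$ dual tree, use a minimality argument to find one dual edge (diagonal) that peels off a component of $2$, $3$, or $4$ quadrilaterals --- your minimal-subtree node $v^\ast$ plays exactly the role of the paper's minimum-cut edge $e_0$ and its endpoint $v_0$ --- and then iterate, leaving at most one singleton $P^\prime$. Your rooted-at-a-leaf formulation merely spells out the minimality and termination details a bit more explicitly than the paper does.
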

\begin{proof}
Let $Q$ be a quadrilateralization of $P$, and $G(v,e)$ be a dual graph of $Q$. Now choose edge $e_0 \in e$ which separates off a minimum number of nodes, that is at least $2$. Let $q \geq 2$ be this minimum, and $v_0 \in v$ be the end node of link $e_0$ in the separated part. The degree of nodes adjacent to $v_0$ should be less than 2, because if degree of those nodes is greater than 1, then a cut through $e_0$ is not a minimum cut. Therefore, the separated part has at most 4 nodes (i.e. 4 quadrilaterals). The remaining part of $G$ is a tree with maximum degree 4, and it can be separated off again. Based on the number of nodes of $G$, one node may remain at the end of the cutting process.
\end{proof}

The sufficient number of static guards to cover an orthogonal polygon, which is $\lfloor \frac{n}{4}\rfloor$, is based on assigning a guard to each pair of quadrilaterals. In order to track an evader inside the orthogonal environment, we assign a mobile guard to more than two quadrilaterals, and as a consequence we will reduce the number of static guards. We will proceed by partitioning $P$ into smaller partitions, such that each partition contains $q=2,3$ or $4$ quadrilaterals. We will show that for each case, only a mobile guard is enough to track an evader for infinite time within a partition which contains $q$ quadrilaterals. First, we show that assigning a mobile guard to each partition will result in fewer than $\lfloor \frac{n}{4}\rfloor$ mobile guards. 

\begin{lemma}
Orthogonal polygon $P$ can be guarded by less than $\lfloor \frac{n}{4}\rfloor$ mobile guards if each partition $P_i$ can be guarded by one mobile guard, and $P_i$'s included at least two partitions with $q=3$, or one partition with $q=4$.
\end{lemma}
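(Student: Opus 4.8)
The plan is to mirror the counting argument in the proof of Proposition~\ref{prop mobile guards}, replacing triangles by quadrilaterals. First I would record the two elementary facts underlying the bookkeeping: a convex quadrilateralization $Q$ of an orthogonal $n$-gon contains exactly $N=\tfrac{n}{2}-1$ quadrilaterals, so that $n=2N+2$ and $\lfloor n/4\rfloor=\lfloor (N+1)/2\rfloor$; and, by Lemma~\ref{partitioning orthogonal polygon}, the diagonals cut $Q$ into partitions $P_1,\dots,P_r$ with $q_i\in\{2,3,4\}$ quadrilaterals each, together with at most one leftover quadrilateral $P'$. Since the cuts are made along edges of the dual graph $G$, i.e.\ along diagonals of $P$ already shared by two quadrilaterals, no new quadrilaterals are created, so the count splits exactly: $N=\sum_{i=1}^{r}q_i+\epsilon$, where $\epsilon=1$ if $P'$ is present and $\epsilon=0$ otherwise.

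Next I would tally the guards. By hypothesis each $P_i$ is covered by a single mobile guard, and the leftover $P'$, being a convex quadrilateral, is covered by one static guard, so the deployment uses exactly $r+\epsilon$ guards. The crux is turning the structural hypothesis into a lower bound on $N$: if some partition has $q_i=4$ then $\sum_i q_i\ge 4+2(r-1)=2r+2$, and if at least two partitions have $q_i=3$ then $\sum_i q_i\ge 3+3+2(r-2)=2r+2$ as well. Either way $N=\sum_i q_i+\epsilon\ge 2r+2+\epsilon$, hence $\lfloor n/4\rfloor=\lfloor (N+1)/2\rfloor\ge \lfloor r+1+\tfrac{1+\epsilon}{2}\rfloor\ge r+1+\epsilon>r+\epsilon$, which is the desired strict inequality; a one-line split on the parity of $N$ makes the floor step rigorous, exactly as at the end of the proof of Proposition~\ref{prop mobile guards}.

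I expect the only real obstacle to be getting this bookkeeping exactly right, in particular justifying that — unlike the triangulation case, where cutting along $r-1$ diagonals forces the correction term $2(r-1)$ — here cutting the tree $G$ simply distributes the fixed set of $N$ quadrilaterals among the partitions with no correction, so the clean identity $N=\sum_i q_i+\epsilon$ holds. It is also worth emphasizing that the hypothesis ``at least two partitions with $q=3$, or one with $q=4$'' is essential rather than cosmetic: without it one could have $q_i=2$ for every $i$, giving $N=2r+\epsilon$ and a deployment of $r+\epsilon$ guards against $\lfloor n/4\rfloor=\lfloor (2r+\epsilon+1)/2\rfloor$, i.e.\ no reduction over static coverage, and the strict inequality would fail. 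Thus the whole argument hinges on showing the hypothesis yields the uniform bound $\sum_i q_i\ge 2r+2$, after which the inequality chain above is routine.
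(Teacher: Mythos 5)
Your proposal is correct and follows essentially the same route as the paper: both arguments count the $N=\tfrac{n}{2}-1$ quadrilaterals of the quadrilateralization, charge one guard per partition plus one for the leftover quadrilateral, and use the hypothesis (two partitions with $q=3$ or one with $q=4$) to extract the one-unit saving that makes the bound $\lfloor n/4\rfloor$ strict. The paper phrases this as an exact identity $\lfloor\frac{r+1}{2}\rfloor=n_2+n_3+n_4+k'+\lfloor\frac{n_3+2n_4}{2}\rfloor$ in the counts $n_2,n_3,n_4$, whereas you use the equivalent lower bound $\sum_i q_i\ge 2r+2$; this is only a cosmetic difference in bookkeeping.
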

\begin{proof}
According to lemma~\ref{partitioning orthogonal polygon}, any orthogonal polygon $P$ can be partitioned into parts which contain only 2,3, or 4 quadrilaterals. Let $n_2,n_3,n_4$ be the number of partitions with 2,3,4 quadrilaterals, respectively. $k^\prime \in \{0,1\}$ corresponds to the last quadrilateral $P^\prime$. We will show $n_2+n_3+n_4 +k^\prime < \lfloor \frac{n}{4}\rfloor = \lfloor \frac{r+1}{2}\rfloor$, where $r$ is the total number of partitions.
\begin{eqnarray}
r &=& 2n_2+3n_3+4n_4+k^\prime,\\
\frac{r+1}{2} &=& n_2+n_3+n_4+\frac{k^\prime+1}{2}+\frac{n_3+2n_4}{2},\\
\lfloor \frac{r+1}{2}\rfloor &=&  n_2+n_3+n_4 +k^\prime + \lfloor \frac{n_3+2n_4}{2}\rfloor.
\end{eqnarray}
For $n_3 \geq 2$ or $n_4 \geq 1$,$\lfloor \frac{n_3+2n_4}{2}\rfloor \geq 1$, lemma holds.   
\end{proof}
When all $P_i$'s contain only two quadrilaterals, that is the dual graph $G$ is a line graph, we can take each pair of $P_i$'s and assign one mobile guard to them instead of assigning two static guards to them and reduce the required number of guards.

In case I, we assume that diagonal $d$ separates off $P_1$, which contains 4 convex quadrilaterals. Without loss of generality, assume $d$ is a diagonal $AB$ in quadrilateral $ABCD$. Quadrilateral $ABCD$ corresponds to a node with degree $4$ in $G(v,e)$. Reflex vertices in a quadrilateral of degree 4 can have only one configuration, which is depicted in figure~\ref{orthogonal-polygon}(a). Since both edges adjacent to $d$, which belong to $P_1$, should have the same orientation (i.e. both should be horizontal or vertical), both end points of $d$ cannot be reflex vertices in polygon $P_1$ simultaneously. Therefore, polygon $P_1$ has three reflex vertices, two of which are $270^{\circ}(3\pi/2)$. Hence, two star regions should be quadrants and the connecting line between these reflex vertices is always inside one of the star regions. As a consequence, these star regions intersect, and the required maximum speed of guard ($v^*_p$) can be found, based on the geometry of $P_1$ and maximum speed of evader.
  \begin{figure}
   \centering
   \includegraphics[scale=.23]{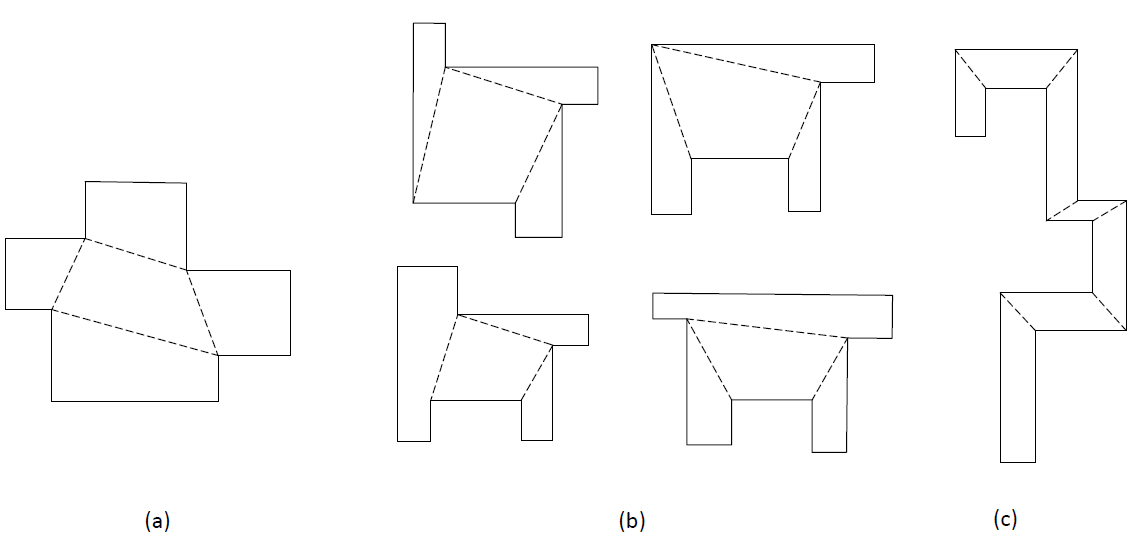}
   \caption{(a) The only configuration for a degree 4 quadrilateral. (b) The possible configurations for a degree 3 quadrilateral. (c) The  possible configuration for a degree 2 quadrilateral}
   \label{orthogonal-polygon}
   \end{figure}

In case II, we assume that partition $P_1$ contains $3$ quadrilaterals, i.e. $q=3$, and $d$ belongs to a quadrilateral which corresponds to a node with degree $3$ on graph $G$. A quadrilateral of degree three can have four configurations. These configurations are depicted in figure~\ref{orthogonal-polygon}(b). $P_1$ contains at most two reflex vertices in all possible configurations. Consequently, one mobile guard can track an evader within this part if the maximum speed of guard be greater than the required speed $v^*_p$. The necessary speed $v^*_p$ can be characterized based on the geometry of star regions and maximum speed of evader $\bar{v}_e$. 

When minimal partitioning $P$ results in only 2 quadrilaterals, which implies that dual graph $G$ is a line graph. We can assign a mobile guard to each three consecutive quadrilaterals. In this case, $P_1$ contains three quadrilaterals and each quadrilateral has a pair of horizontal or vertical edges. In this case, $P_1$ contains only two disjoint star regions. Figure~\ref{orthogonal-polygon}(c) shows such a condition.

\section{CONCLUSIONS}
In this work, we have proposed an algorithm to deploy a group of stationary and mobile guards in order to track an intruder inside a simply connected polygon. The environment is partitioned into smaller polygons, and a guard is assigned to each partition. Guards can be stationary or mobile depending on the geometry of the partition. The intruder is considered as a mobile agent with bounded speed. We provided a strategy to deploy stationary guards. For mobile guards, we presented an event-triggered strategy for their motion based on the intruder's trajectory in the dynamic zones. We have shown that the total number of guards is less than $\lfloor {\frac{n}{3}} \rfloor$, which is the sufficient number of point guards deployed for the coverage problem. 

An ongoing effort in our research is to investigate cooperative task allocation among guards. The proposed algorithm is based on assigning a guard to each partition. Let $G$ be a dual graph in which each node corresponds to a partition in $P$, and there an edge exists between the nodes if they share an edge in triangulation $T$. In the current algorithm, once the intruder enters the partition, the assigned guard tracks it independently. However, cooperation among guards which are adjacent in $G$ will reduce the load on each guard. Another direction of the future research is to construct a road map for the guards which satisfies the necessary conditions of tracking. The current road map for the mobile guards is the line segment connecting the star regions or their intersections. This road map provides an upper bound for the maximum speed of the guard. However, the problem of finding the optimal speed for the guard can be cast as a convex optimization problem for certain subclasses of polygons, for example, polygons in which star regions are disjoint, and non-adjacent reflex vertices on the dual graph are not mutually visible. 
Other future directions of research are to incorporate sensing constraints, non-holonomic constraints and extension to 3 dimensional environments.

\addtolength{\textheight}{-12cm}   
\bibliographystyle{IEEEtran}

\bibliography{ref}

\end{document}